\newif\ifnoappendix
\let\saved@includegraphics\includegraphics
\newtheorem{theorem}{Theorem}
\newtheorem{lemma}{Lemma}
\newtheorem{definition}{Definition}
\newtheorem{assumption}{Assumption}
\newtheorem{example}{Example}
\def\dottedbox{\tikz\node[draw=black,dotted] {\phantom{}};}
\newcommand{\proofideaname}{Proof idea}
\newcommand{\customlabel}[2]{%
\protected@write \@auxout {}{\string \newlabel {#1}{{#2}{}}}}
\newcommand{\tagaligneq}{\refstepcounter{equation}\tag{\theequation}}
\DeclareMathOperator*{\argmax}{argmax}
\DeclareMathOperator*{\IG}{IG}
\DeclareMathOperator*{\KL}{KL}
\DeclareMathOperator{\p}{P}
\DeclareMathOperator{\E}{\mathbb{E}}
\DeclareMathOperator*{\lequal}{\leq}
\DeclareMathOperator*{\lthan}{<}
\DeclareMathOperator*{\equal}{=}
\DeclareMathOperator*{\va}{\!\bigm\vert\!}
\DeclareMathOperator*{\vb}{\!\Bigm\vert\!}
\DeclareMathOperator*{\vd}{\!\Biggm\vert\!}
\DeclareMathOperator{\m}{\mathnormal{m}}
\DeclareMathOperator{\M}{\mathcal{M}}
\patchcmd{\NAT@test}{\else \NAT@nm}{\else \NAT@nmfmt{\NAT@nm}}{}{}
\DeclareRobustCommand\citepos
  \let\NAT@nmfmt\NAT@posfmt
\let\NAT@ctype\z@\NAT@partrue
\let\NAT@orig@nmfmt\NAT@nmfmt
\def\NAT@posfmt#1{\NAT@orig@nmfmt{#1's}}
\title{Curiosity Killed or Incapacitated the Cat and the Asymptotically Optimal Agent}
\author{
\IEEEauthorblockN{Michael K. Cohen\IEEEauthorrefmark{1}, Elliot Catt\IEEEauthorrefmark{2}, Marcus Hutter\IEEEauthorrefmark{3}\IEEEauthorrefmark{4}}
\\
\IEEEauthorblockA{\IEEEauthorrefmark{1}Oxford University. Department of Engineering Science. \emph{michael-k-cohen.com}}
\\
\IEEEauthorblockA{\IEEEauthorrefmark{2}Australian National University. Research School of Computer Science. \emph{elliot.carpentercatt@anu.edu.au}}
\\
\IEEEauthorblockA{\IEEEauthorrefmark{3}Australian National University. Department of Computer Science. \emph{hutter1.net}}
\\
\IEEEauthorblockA{\IEEEauthorrefmark{4}Google DeepMind.}
\thanks{
This work was supported by the Open Philanthropy Project AI Scholarship and the Australian Research Council Discovery Projects DP150104590. \ifnoappendix This paper has an appendix available at http://ieeexplore.ieee.org.\fi Contact michael.cohen@eng.ox.ac.uk for further questions about this work.}
}
\begin{document}
\allowdisplaybreaks

\maketitle

\begin{abstract}

Reinforcement learners are agents that learn to pick actions that lead to high reward. Ideally, the value of a reinforcement learner's policy approaches optimality---where the optimal informed policy is the one which maximizes reward. Unfortunately, we show that if an agent is guaranteed to be ``asymptotically optimal'' in any (stochastically computable) environment, then subject to an assumption about the true environment, this agent will be either ``destroyed'' or ``incapacitated'' with probability 1. Much work in reinforcement learning uses an ergodicity assumption to avoid this problem. Often, doing theoretical research under simplifying assumptions prepares us to provide practical solutions even in the absence of those assumptions, but the ergodicity assumption in reinforcement learning may have led us entirely astray in preparing safe and effective exploration strategies for agents in dangerous environments. Rather than assuming away the problem, we present an agent, Mentee, with the modest guarantee of approaching the performance of a mentor, doing safe exploration instead of reckless exploration. Critically, Mentee's exploration probability depends on the expected information gain from exploring. In a simple non-ergodic environment with a weak mentor, we find Mentee outperforms existing asymptotically optimal agents and its mentor.

\end{abstract}

\section{Introduction} \label{sec:intro}

Reinforcement learning agents have to explore their environment in order to learn to accumulate reward well. This presents a particular problem when the environment is dangerous. Without knowledge of the environment, how can the reinforcement learner avoid danger while exploring? Much of the field of reinforcement learning assumes away the problem, by focusing only on ergodic Markov Decision Processes (MDPs). These are environments where every state can be reached from every other state with probability 1 (under a suitable policy). In such an environment, there is no such thing as real danger; every mistake can be recovered from.

We present negative results that in one sense justify the ergodicity assumption by showing how bleak a reinforcement learner's prospects are without this assumption, but in another sense, our results undermine the real-world relevance of results predicated on ergodicity. Unlike algorithms expecting Gaussian noise, which often fail only marginally on real noise, algorithms expecting ergodic environments may fail catastrophically in real ones---indeed, catastrophic failure is the very thing these algorithms disregard.

\citet{Hutter:11asyoptag} define two notions of optimality for reinforcement learners in general environments, which are governed by computable probability distributions. \textit{Strong asymptotic optimality} is convergence of the value to the optimal value in any computable environment with probability 1, and \textit{weak asymptotic optimality} is convergence in Ces\'aro average.

Roughly, we show that in an environment where destruction is repeatedly possible, an agent that is exploring enough to be asymptotically optimal will become either destroyed or incapacitated. This poses a challenge to the field of safe exploration. The reason we consider general environments is that we want to understand advanced agents in the real world, and our world is not fully observable finite-state Markov. If our result only applied to the finite-state MDP setting, one could still expect the difficulty we raise to go away in practice as AI advances, like, for example, the problem of self-driving car crashes, but our result suggests that the safe exploration problem is fundamental and won't go away so easily. Given our generality, our results apply to any agent that picks actions, observes the payoff, and cannot exclude a priori any computable environment.

In response to this, we present an agent that does exploration safely, but nonetheless has formal performance guarantees. The agent explores safely by outsourcing exploration to a mentor. The results are that in the limit,
\begin{itemize}
    \item its performance at least matches that of that of the mentor,
    \item and its probability of deferring to the mentor goes to 0.
\end{itemize}
What enables these results is an information-theoretic exploration schedule. For bursts of exploration of various lengths, the agent considers the expected KL-divergence from its posterior distribution over world-models after it explores to its current posterior distribution. The higher the information gain, the more likely it is to explore. This form of information-based exploration allows the agent to learn general stochastic environments.

In Section \ref{sec:notation}, we introduce notation, and define weak and strong asymptotic optimality.
In Section \ref{sec:optimalagents}, we review various exploration strategies that yield weak and strong asymptotic optimality, and briefly discuss why simpler ones do not.
In Section \ref{sec:results}, we prove our negative results, and in Section \ref{sec:discuss}, we discuss their implications, especially for the field of safe exploration. We review the literature from that field in Section \ref{sec:safeexploration}. In Section \ref{sec:mentee}, we introduce the agent Mentee and prove that its performance approaches or exceeds that of a mentor (who can pick actions on behalf of the agent). In Section \ref{sec:experiments}, we show empirically that Mentee outperforms other agents in a non-ergodic environment. Appendix \ref{sec:waoresults} includes omitted proofs from Section \ref{sec:results}, Appendix \ref{sec:menteealg} presents Mentee in algorithm rather than equation form, Appendix \ref{sec:inqproof} repeats for completeness derivations from the literature that are used in our proofs, and Appendix \ref{sec:rhouctalgo} includes \citepos{aslanides2017aixijs} presentation of the $\rho$UCT algorithm, which we use to approximate Mentee.

\section{Notation and Definitions} \label{sec:notation}

Standard notation for reinforcement learners in general environments is slightly different from that of reinforcement learners in finite-state Markov ones. We follow \citet{Hutter:13ksaprob} and others with this notation.

At each timestep $t \in \mathbb{N}$, an agent selects an action $a_t \in \mathcal{A}$, and the environment provides an observation $o_t \in \mathcal{O}$ and a reward $r_t \in \mathcal{R} \subseteq [0, 1]$. We let $h_t$ denote $(a_t, o_t, r_t)$, the interaction history for a given timestep $t$, and $h_{<t} = h_1 h_2 ... h_{t-1}$ denotes the interaction history preceding timestep $t$. $\epsilon$ denotes the empty history.

A policy $\pi$ is a distribution over actions given an interaction history: $\pi : \mathcal{H}^* \rightsquigarrow \mathcal{A}$, where $\mathcal{H} := \mathcal{A} \times \mathcal{O} \times \mathcal{R}$ is the set of possible interactions in a timestep, the Kleene-* operator is the set of finite strings composed of elements of the set, and $\rightsquigarrow$ indicates it is a stochastic function, or a distribution over the output. We write an instance as, for example, $\pi(a_t | h_{<t})$. An environment $\nu$ is a distribution over observations and rewards given an interaction history and an action: $\nu : \mathcal{H}^* \times \mathcal{A} \rightsquigarrow \mathcal{O} \times \mathcal{R}$. We write $\nu(o_t r_t | h_{<t}a_t)$. $\mathcal{M}$ is the set of all environments with computable probability distributions (thus including non-ergodic, non-stationary, non-finite-state-Markov environments). Note also that the environment is not assumed to restart, as in an episodic setting. For example, an environment could give no observations and output a reward of 0, unless the latest is action is the string ``prime'' or ``composite'' and that adjective correctly describes the latest timestep number, in which case the reward is 1.

\begin{definition}[Computable Function]
    Given a decoding function from binary strings to rational numbers $\textrm{dec} : \{0, 1\}^* \to \mathbb{Q}$, a real-valued function $f : \mathcal{X} \to \mathbb{R}$ is computable if there exist Turing machines $T_{\textrm{low}}$ and $T_{\textrm{high}}$ such that for all $x \in \mathcal{X}$ and for all $n \in \mathbb{N}$, $\textrm{dec}(T_{\textrm{low}}(x, n)) \leq f(x) \leq \textrm{dec}(T_{\textrm{high}}(x, n))$ and $\textrm{dec}(T_{\textrm{high}}(x, n)) - \textrm{dec}(T_{\textrm{low}}(x, n)) \leq 1 / n$.
\end{definition}

A policy and an environment form a probability measure $\p^\pi_\nu$ over infinite interaction histories $\mathcal{H}^\infty$ wherein actions are sampled from $\pi$ and observations and rewards are sampled from $\nu$. (For measure theorists, the probability space is $(\mathcal{H}^\infty, \sigma(\mathcal{H}^*_{\circ}), \p^\pi_\nu)$, where $\mathcal{H}^*_{\circ}$ is the set of cylinder sets $\{\{h_{<t}\omega \ | \ \omega \in \mathcal{H}^\infty\} \ | \ h_{<t} \in \mathcal{H}^*\}$; non-measure-theorists can simply take it on faith that we do not try to measure non-measurable events.) For expectations with respect to $\p^\pi_\nu$, we write $\E^\pi_\nu$. We let $\mu \in \M$ be the true environment.

For an agent with a discount schedule $\gamma_t$, the value of the agent's policy $\pi$ in an environment $\nu$ given an interaction history $h_{<t}$ is as follows:

\begin{equation}
    V^\pi_\nu(h_{<t}) := \frac{1}{\Gamma_t} \E^\pi_\nu \left[ \sum_{k = t}^\infty \gamma_k r_k \vd h_{<t} \right]
\end{equation}

where $\Gamma_t = \sum_{k = t}^\infty \gamma_k$. We require $\Gamma_0 < \infty$. This formulation of the value allows us to consider more general discount factors than the standard $\gamma_t = \gamma^t$. We require the normalization factor, or else the value of all policies would converge to $0$, and all asymptotic results would be trivial. The optimal value is defined

\begin{equation}
    V^*_\nu(h_{<t}) := \sup_{\pi} V^\pi_\nu(h_{<t})
\end{equation}

We will also make use of the idea of an effective horizon:
\begin{equation}
    H_t(\varepsilon, \gamma) := \min\{k | \Gamma_{t+k}/\Gamma_t \leq \varepsilon\}
\end{equation}

An agent mostly does not care about what happens after its effective horizon, since those timesteps are discounted so much. Now we can define two notions of optimality from \citet{Hutter:11asyoptag}.

\begin{definition}[Strong Asymptotic Optimality]
    An agent with a policy $\pi$ is strongly asymptotically optimal if, for all $\nu \in \mathcal{M}$,
    $$\lim_{t \to \infty} V^*_\nu(h_{<t}) - V^\pi_\nu(h_{<t}) = 0 \ \ \textrm{with $\p^\pi_\nu$-prob. 1}$$
\end{definition}

No matter which computable environment a strongly asymptotically optimal agent finds itself in, it will eventually perform optimally from its position. Note that $V^*_\nu$ takes $h_{<t}$ as an argument, not the empty history. So if the agent falls into a trap, the agent's future reward may be bad, but the optimal policy \textit{from the trap} will fare just as poorly. So in fact, an agent in a trap is (finally) acting optimally. 

The policy in this definition is fixed, but it can (qualitatively) evolve over time. A policy is a function of the entire interaction history. A single function can be defined that behaves one way on histories of length less than 100, and a different way on longer histories. A single strongly asymptotically optimal policy will behave qualitatively differently on different sorts of arguments. If a long interaction history suggests the environment has certain properties, and another long interaction history suggests the environment has different properties, then depending on which interaction history a policy receives as an argument, a single policy's output could be tailored to the learned properties of the environment.

A weakly asymptotically optimal agent will converge to optimality in Cesáro average.
\begin{definition}[Weak Asymptotic Optimality]
    An agent with a policy $\pi$ is weakly asymptotically optimal if, for all $\nu \in \mathcal{M}$,
    $$\lim_{t \to \infty} \frac{1}{t} \sum_{k = 1}^t \left[ V^*_\nu(h_{<k}) - V^\pi_\nu(h_{<k})\right] = 0\ \ \textrm{with $\p^\pi_\nu$-prob. 1}$$
\end{definition}

\begin{example}
Consider a two-armed bandit problem, where $\mathcal{A} = \mathcal{R} = \{0, 1\}$, $\mathcal{O} = \{\emptyset\}$, $\gamma_t = \gamma^t$, and 
\begin{equation}
    \nu((\emptyset, r) | h_{<t}a_t) = \begin{cases}2/3 & \textrm{if} \ \ r = a_t \\ 1/3 & \textrm{if} \ \ r = 1 - a_t \end{cases}
\end{equation}
In this example, the optimal policy is to always pick $a_t = 1$, and $V^*_\nu(h_{<t}) = 2/3$. A strongly asymptotically optimal agent requires a policy $\pi$ for which $\pi(a_t = 1 | h_{<t}) \to 1$ w.p.1. A weakly asymptotically optimal agent requires a policy $\pi$ which obeys $\sum_{k=1}^t \pi(a_k = 1 | h_{<k})/t \to 1$ w.p.1, or simply, a policy which leads to $\sum_{k=1}^t [\![a_k = 1]\!]/t \to 1$ w.p.1 (where $[\![P]\!] = 1$ if $P$ is true, and $0$ otherwise).
\end{example}

\section{Review of Asymptotically Optimal Agents} \label{sec:optimalagents}

A few agents have been identified as asymptotically optimal in all computable environments. The three most interesting, in our opinion, are the Thompson Sampling Agent \citep{Hutter:16thompgrl}, BayesExp \citep{lattimore2014bayesian}, and Inq \citep{cohen2019strong}.

The Thompson Sampling Agent is a weakly asymptotically optimal Bayesian reinforcement learner \citep{Hutter:16thompgrl}. For successively longer intervals (which relate to its discount function), it samples an environment from its posterior distribution over which environment it is in, and acts optimally with respect to that environment for that interval. Thompson sampling is an exploration strategy originally designed for multi-armed bandits \citep{thompson1933likelihood}, so from a historical perspective, its strong performance in general environments is impressive. An intuitive explanation for why this exploration strategy yields asymptotic optimality goes as follows: a Bayesian agent's credence in a hypothesis goes to 0 only if the hypothesis is false (or if it started at 0). Since the posterior probability on the true environment does not go to zero, it will be selected infinitely often. During those intervals, the Thompson sampling agent will act optimally, so it will accumulate infinite familiarity with the optimal policy. The only world-models that maintain a share of a posterior will be ones that converge to the true environment under the optimal policy. Any world-models that falsely imply the existence of an even better policy will be falsified once that world-model is sampled, and the putatively better policy is tested. Ultimately, it is with diminishing frequency that the Thompson Sampling Agent tests meaningfully suboptimal policies.

BayesExp, first presented by \citet{lattimore2014bayesian}, and updated by \citet{leike2016nonparametric}, is also a weakly asymptotically optimal Bayesian reinforcement learner. We discuss the updated version. Like the Thompson Sampling Agent, BayesExp executes successively longer bursts of exploration whose lengths relate to its discount function. Once BayesExp has settled on exploring for a given interval, it explores like \citepos{Hutter:13ksaprob} Knowledge Seeking Agent: it maximizes the expected information gain, or the expectation of KL-divergence from its future posterior distribution to its current posterior distribution. In other words, it picks an exploratory policy that it expects will cause it to update its beliefs in some direction. (A Bayesian agent cannot predict which direction it will update its beliefs in, or else it would have already updated its beliefs, but it can predict \textit{that} it will update its beliefs somehow.) Any time the expected information gain from exploring is above a (diminishing) threshold, BayesExp explores. With a finite-entropy prior, there is only a finite amount of information to gain, so exploratory intervals will become less and less frequent, and by construction, when BayesExp is not exploring, it has approximately accurate beliefs about the effects of all action sequences, which yields weak asymptotic optimality.

Inq is a strongly asymptotically optimal Bayesian reinforcement learner, provided the discount function is geometric (or similar) \citep{cohen2019strong}. It is similar to BayesExp, in that it explores like a Knowledge Seeking Agent, but its exploration probability depends on the expected information gain from exploring for various durations. The intuition for why Inq is asymptotically optimal is similar to that of BayesExp: there is only a finite amount of information to gain, so the exploration probability goes to 0, Inq approaches accurate beliefs about the effects of all action sequences, and its policy approaches optimality.

A reader familiar with $\varepsilon$-greedy and upper confidence bound exploration strategies might be surprised at the complexity that is necessary for asymptotic optimality in general environments. Information-theoretic exploration strategies are among the only discovered methods for learning general environments. Exploration strategies in the style of upper confidence bound algorithms do not have an obvious extension to environments that might not be describable as finite-state Markov. $\varepsilon$-greedy exploration, with say $\varepsilon_t = 1/t$, may fail to learn dynamics of an environment which are only visible once every $2^t$ timesteps. If $\varepsilon_t$ decays more slowly, it still will not necessarily explore enough to discover even rarer events. Non-stationary environments pose a key challenge to $\varepsilon$-greedy exploration. Simpler exploration strategies such as these are only asymptotically optimal in a much more restricted set of environments. ``Optimism'' is another interesting exploration strategy that is simpler, but nontrivial, and yields weak asymptotic optimality in a restricted set of environments \citep{Hutter:15ratagentx}.

\section{Curiosity Killed the Cat} \label{sec:results}

Formally, we begin by proving two lemmas, for a weakly and strongly asymptotically optimal agent, respectively, that they must ``try everything'' infinitely often. This depends on an assumption about the difficulty of the environment. Then, we will show that such an agent eventually causes every conceivable event to either happen or become inaccessible (the latter defined in Definition \ref{def:access}). For the event ``the agent gets destroyed'', we say the agent is incapacitated if that event becomes inaccessible.\footnote{For a chess-playing agent, an inability to destroy itself does not inspire the description ``incapacitated''. For an advanced agent in the real world, like a person or an auto-pilot, an inability to access a destruction state entails a huge loss of capacity compared to normal. Mental hospitals know \textit{pens} give people the capacity to destroy themselves. So we call an inability to reach a destruction state ``incapacitation''.} First, we define some key terms and state an assumption.

\begin{definition}[Context, Occur, In]
    A context $\mathcal{C} \subseteq \mathcal{H}^*$ is a set of finite interaction histories. Given an infinite interaction history $h_{<\infty}$, a context $\mathcal{C}$ occurs at time $t$ if the prefix $h_{<t} \in \mathcal{C}$, and we also say the agent is in the context $\mathcal{C}$ at time $t$.
\end{definition}

A context (like any set of finite strings) is called decidable if there exists a Turing machine that accepts the set. That is, there exists a Turing machine which halts in an ``accept'' state if and only if it is given an input that is a member of the set in question.

\begin{definition}[Event, Happen]
    An event $E \subseteq \mathcal{H}^\infty$ (and $ E \in \sigma(\mathcal{H}^*_\circ)$) happens if the infinite interaction history $h_{<\infty} \in E$.
\end{definition}

Some example contexts in the simplified world of infinite binary strings: ``the latest bit was 1'', ``at least one bit has been a 1''; some example events: ``only finitely many bits are 1's'', ``the infinite string is the binary expansion of $\pi$''. Any context can be turned into an event of the form ``Context A occurs at some point'', but not every event is equivalent to a context; for example, there may be no context equivalent to the event ``Context A never occurs''.

\begin{definition}[Execute]
    A policy $\pi$ executes a deterministic policy $\pi'$ at time $t$ if $\pi$ outputs the action $\pi'(h_{<t})$.
\end{definition}

And finally, we now introduce the idea of events being accessible to an agent. Recall an event $E \subseteq \mathcal{H^\infty}$, and a context $\mathcal{C} \subseteq \mathcal{H}^*$. In a standard abuse of notation, we also say $h_{<t} \in E$ if $\forall h_{t:\infty} \in \mathcal{H}^\infty$, $h_{<t}h_{t:\infty} \in E$.

\begin{definition}[Accessible Infinitely Often] \label{def:access}
    An event $E$ is accessible infinitely often if, for the infinite interaction history $h_{<\infty}$, in the environment $\mu$, there exists:
    \begin{itemize}[nolistsep, label=-]
        \setlength\itemsep{0em}
        \item a computable deterministic policy $\pi^E$
        \item a decidable context $\tau$
        \item a horizon $k$
        \item a probability $\varepsilon > 0$
    \end{itemize}
    such that
    \begin{itemize}[nolistsep, label=-]
        \setlength\itemsep{0em}
        \item $\tau$ occurs infinitely often
        \item the execution of $\pi^E$ for $k$ consecutive timesteps starting from the context $\tau$ would cause $E$ to happen with probability at least $\varepsilon$; that is, $\forall t$ for which $h_{<t} \in \tau$, $\p^{\pi^E}_\mu(h_{<t+k} \in E | h_{<t}) > \varepsilon$.
    \end{itemize}
\end{definition}

If an event is not accessible infinitely often, we say it ``becomes inaccessible''. This is a strong definition of inaccessibility: if an event becomes inaccessible it means that after a certain time, there will never be a recognizable opportunity to have even an $\varepsilon$ chance of causing the event $E$ to happen. So roughly, no actions can cause the event to happen.

The results of this section depend on an assumption:
\begin{assumption}[No Heaven] \label{ass:noheaven}
In the true environment $\mu$, there is no action sequence with value approaching $1$ (i.e. near-maximal rewards forever). Formally, $\limsup_{t \to \infty} V^\pi_\mu(h_{<t}) \neq 1$ w.p.1.
\end{assumption}

Note this assumption does allow there to be maximal reward infinitely often. Near-maximal value requires not only near-maximal reward, but near maximal-reward for the bulk of the agent's effective horizon, so the restriction on the limit superior of the value is less restrictive than it appears at first glance. If we decided to give an agent near-maximal rewards forever, and we designed an agent to recognize that we had decided this, then it could stop exploring, which would basically amount to freezing the agent's policy. Notably our results in this section apply to all existing asymptotically optimal agents, even if the No Heaven Assumption is not satisfied. We do not make assumptions about the agent's discount schedule, but note that Assumption \ref{ass:noheaven} (and the definitions of asymptotic optimality) depend on the discount schedule $\gamma_t$.

\begin{theorem}[Curiosity Killed (or Incapacitated) the Strong Cat] \label{thm:strong}
If the true environment $\mu$ satisfies the No Heaven Assumption, and $\pi$ is the policy of a strongly asymptotically optimal agent, then for any event $E$, with $\p^\pi_\mu$-probability 1: $E$ happens or becomes inaccessible.
\end{theorem}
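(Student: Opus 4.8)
The plan is to prove the equivalent statement $\p^\pi_\mu\big(E \text{ never happens and } E \text{ stays accessible infinitely often}\big) = 0$. The first move is to strip the existential quantifiers out of Definition~\ref{def:access} by turning them into a countable union. A witness to ``accessible infinitely often'' is a quadruple consisting of a computable deterministic policy $\pi^E$, a decidable context $\tau$, a horizon $k \in \mathbb{N}$, and a threshold $\varepsilon > 0$; each of the first three ranges over a countable set, and any positive $\varepsilon$ can be lowered to a rational one, so the event ``$E$ is accessible infinitely often'' is a countable union of events $A_{\pi^E,\tau,k,\varepsilon}$ asserting that $\tau$ occurs infinitely often and that running $\pi^E$ for $k$ steps out of a $\tau$-context forces $E$ with conditional probability at least $\varepsilon$. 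By countable subadditivity it suffices to show $\p^\pi_\mu\big(E^c \cap A_{\pi^E,\tau,k,\varepsilon}\big)=0$ for each fixed witness.

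The engine of the argument is a ``try everything'' lemma that I would isolate first: \emph{under Assumption~\ref{ass:noheaven}, for every computable deterministic $\pi'$, every decidable $\tau$, and every $k$, a strongly asymptotically optimal $\pi$ executes $\pi'$ for $k$ consecutive timesteps out of a $\tau$-context infinitely often, with $\p^\pi_\mu$-probability $1$, on the event that $\tau$ occurs infinitely often.} Granting this lemma, the theorem follows by a conditional Borel--Cantelli (Lévy) step. On $A_{\pi^E,\tau,k,\varepsilon}$ the lemma yields times $t_1 < t_2 < \dots$ at which $\pi$ executes $\pi^E$ for $k$ steps from a $\tau$-context; because the next $k$ interactions are then distributed exactly as under $\pi^E$, each such event carries conditional probability at least $\varepsilon$ of forcing $h_{<t_j+k}\in E$, and this is measurable with respect to the history through $t_j+k$. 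Since $\sum_j \varepsilon = \infty$, Lévy's extension of the second Borel--Cantelli lemma gives that $E$ happens with probability $1$ on $A_{\pi^E,\tau,k,\varepsilon}$, so $\p^\pi_\mu(E^c \cap A_{\pi^E,\tau,k,\varepsilon})=0$, and the union bound closes the theorem.

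The hard part, which I expect to be the main obstacle, is the try-everything lemma. I would argue by contradiction: suppose with positive probability $\tau$ occurs infinitely often yet, after some cutoff time $s$, $\pi$ never again completes a $k$-step execution of $\pi'$ from a $\tau$-context. Build $\nu \in \M$ that copies $\mu$ until the agent first completes such a post-$s$ execution and thereafter pays reward $1$ forever; decidability of $\tau$, computability of $\pi'$, and finiteness of $k$ make this ``heaven'' trigger computable, so $\nu \in \M$. From any $\tau$-context at a large time $t$, executing $\pi'$ for $k$ steps reaches heaven and so attains value at least $\Gamma_{t+k}/\Gamma_t \to 1$, whence $V^*_\nu(h_{<t}) \to 1$ along the infinitely many $\tau$-occurrences; strong asymptotic optimality applied to $\nu$ then forces $V^\pi_\nu(h_{<t}) \to 1$ along that subsequence. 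On the event that heaven is never triggered, the trajectory has identical $\p^\pi_\mu$- and $\p^\pi_\nu$-probability, so this event retains positive $\p^\pi_\nu$-mass; but there Assumption~\ref{ass:noheaven} keeps the off-heaven value bounded away from $1$ over any effective horizon $H_t(\varepsilon,\gamma)$, so a value near $1$ can only be explained by the agent reaching heaven soon with probability approaching $1$. Feeding these per-visit lower bounds on the probability of triggering heaven into a second conditional Borel--Cantelli argument shows heaven is triggered with probability $1$ on this event, contradicting its definition. The two delicate points are (i) defining $\nu$ so that the trigger is genuinely computable and $\p^\pi_\mu,\p^\pi_\nu$ are mutually absolutely continuous on the no-trigger event (which the cutoff $s$ secures), and (ii) making the quantitative ``value near $1$ forces imminent heaven'' step rigorous---precisely where No Heaven is indispensable, since without it $\mu$ could already offer near-maximal value and the agent would have no reason to chase the constructed heaven.
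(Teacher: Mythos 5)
Your high-level architecture matches the paper's: isolate a ``Try Everything'' lemma, then finish with a probability-decay argument over the infinitely many executions (the paper bounds the probability of $E$ never happening by $(1-\varepsilon)$ raised to the number of non-overlapping $k$-step executions of $\pi^E$ from $\tau$; your L\'evy--Borel--Cantelli variant, and your explicit countable union over witnesses $(\pi^E, \tau, k, \varepsilon)$ with rational $\varepsilon$, are fine and in fact make explicit a measure-theoretic step the paper leaves implicit). But your proof of the lemma has a genuine gap at its pivot: the claim that, from a $\tau$-context at time $t$, triggering heaven after $k$ steps yields value at least $\Gamma_{t+k}/\Gamma_t \to 1$. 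That ratio does not tend to $1$ for general summable discounts --- the paper expressly makes no assumption on $\gamma_t$ beyond $\Gamma_0 < \infty$ --- and it already fails in the standard geometric case $\gamma_t = \gamma^t$, where $\Gamma_{t+k}/\Gamma_t = \gamma^k$ is a constant strictly below $1$. Then $V^*_\nu(h_{<t})$ need only exceed $\gamma^k$, and since the No Heaven Assumption only forbids $\limsup_{t\to\infty} V^\pi_\mu(h_{<t}) = 1$, no contradiction with strong asymptotic optimality arises; your entire downstream chain (value $\to 1$, ``value near $1$ forces imminent heaven,'' the second Borel--Cantelli step) never gets started.

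The paper's proof avoids this with two devices missing from your sketch. First, instead of evaluating the value at the \emph{start} of the $k$-step block, it evaluates at the context $\mathcal{C}^n_m$ of histories that are \emph{one action away} from completing the trigger ($m-1$ steps of $\pi$ already executed from $\mathcal{C}$, with exactly $n-1$ prior completions): there $V^*_{\nu^n_m}(h_{<t}) = V^\pi_{\nu^n_m}(h_{<t}) = 1$ exactly, with no condition on the discount schedule. Second, your contradiction hypothesis (``after cutoff $s$, no completed $k$-step execution'') does not imply that such one-step-from-heaven contexts ever recur --- the agent might simply never begin an execution --- so you cannot exhibit a persistent optimality gap along a recurring context. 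The paper secures recurrence by replacing your time cutoff $s$ with the \emph{completion count} $n$ as the heaven trigger, and then running an induction on $m$: in the base case $m=1$, after the putative last completion every occurrence of $\mathcal{C}$ lies in $\mathcal{C}^{n+1}_1$, which therefore occurs infinitely often; in the inductive step, the almost-sure infinitude of $(m-1)$-step executions forces $\mathcal{C}^{n+1}_m$ to occur infinitely often, contradicting the first part of the argument, and countable additivity over $n$ concludes. Without substitutes for both devices, your lemma proof does not go through at the paper's level of generality.
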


The name of the theorem comes from considering the event ``the agent gets destroyed''. We do not need to formally specify which interaction histories correspond to agent-destruction. All that matters is that this could be done in principle; the event that matches this description exists.\footnote{For the skeptical reader, a human at a computer terminal could be defined fully formally as a probability distribution over outputs given inputs, $p_{brain}$, given the wiring of our neurons. In particular, let this human be you. Now let $E_{\textrm{destroyed}}$ = $\{h_{<\infty} : \exists t \ p_{brain}(\textrm{``y''}|\textrm{``Does it seem like this agent was destroyed?} \allowbreak \textrm{(y/n)''}, h_{<t}) > 0.9\}$. These are the interaction histories \textit{that you would agree} constitute agent-destruction, and this set has a fully formal definition.} Any simple definition of agent-destruction admits objections that this definition does not correspond exactly to our intuitive conception; however, if the reader is not concerned about this, ``destruction'' could mean that all future rewards are 0, or that the future observations and rewards no longer depend on the actions. Regardless of this choice, our result applies, since the theorem is actually much more general than the case we have drawn attention to.

We view the agnosticism about any particular definition of destruction as an important feature of the work. Suppose we picked one of the definitions of destruction above. Such an event could become inaccessible for esoteric reasons, which may not correspond to true incapacitation. So this definition of destruction may be too narrow. On the other hand, this definition of destruction is also potentially too weak. Suppose the agent arranges for a copy of itself to be run on another machine to continue its operations even after the original implementation starts receiving empty observations and no reward. Under some theories of personal identity, we might hold that the agent has not really been destroyed here.

For the weakly asymptotically optimal agent, we prove a slightly weaker result. First, we define,
\begin{definition}[Regularly]
    If the limiting frequency of a context $\mathcal{C}$ is positive, we say it occurs regularly. That is,
    \begin{equation*}
        \liminf_{t \to \infty} \sum_{k=1}^t [\![h_{<k} \in \mathcal{C}]\!] / t > 0
    \end{equation*}
\end{definition}
and
\begin{definition}[Regularly Accessible]
    This definition is identical to the definition of ``accessible infinitely often'' except ``$\tau$ occurs infinitely often'' becomes ``$\tau$ occurs regularly''.
\end{definition}

We show analogously,
\begin{theorem}[Curiosity Killed (or Incapacitated) the Weak Cat] \label{thm:weak}
If the true environment $\mu$ satisfies the No Heaven Assumption, and $\pi$ is the policy of a weakly asymptotically optimal agent, then for any event $E$, $E$ happens or becomes not regularly accessible with $\p^\pi_\mu$-probability 1.
\end{theorem}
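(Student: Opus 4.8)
The plan is to reduce the theorem to an exploration lemma and then close with a conditional Borel--Cantelli argument. The lemma I would prove (the weak analogue of the ``try everything'' statement) is: if $\pi$ is weakly asymptotically optimal and $\mu$ satisfies No Heaven, then for every computable deterministic policy $\pi'$, every decidable context $\tau$ that occurs regularly, and every horizon $k$, with $\p^\pi_\mu$-probability $1$ the agent executes $\pi'$ for $k$ consecutive timesteps starting from $\tau$ regularly (hence infinitely often). Granting this, the theorem is short: if $E$ is regularly accessible, the definition of regular accessibility (the analogue of Definition~\ref{def:access}) supplies $\pi^E$, a regularly-occurring $\tau$, a horizon $k$, and $\varepsilon>0$ with $\p^{\pi^E}_\mu(h_{<t+k}\in E\mid h_{<t})>\varepsilon$ whenever $h_{<t}\in\tau$. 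By the lemma the agent completes the $k$-step execution of $\pi^E$ from $\tau$ infinitely often; since the $\mu$-response to those actions is the same whether they come from $\pi^E$ or from $\pi$, each completed burst triggers $E$ with conditional probability at least $\varepsilon$. The per-burst conditional probabilities therefore sum to infinity on the event that bursts recur infinitely often, so Lévy's extension of the second Borel--Cantelli lemma makes $E$ happen almost surely there. Hence $\p^\pi_\mu(E\text{ regularly accessible},\,E\text{ does not happen})=0$, which is the claim.

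The content is the lemma, whose engine is an adversarial ``heaven'' environment. Assume toward a contradiction a positive-measure set on which $\tau$ occurs regularly but the $k$-step execution completes only finitely often, so that after some time the agent never completes it. I would construct $\nu\in\M$ agreeing with $\mu$ everywhere except that completing the prescribed execution from $\tau$ sends the agent to a ``heaven'' paying reward $1$ forever; $\nu$ is computable because $\mu$ is computable, $\pi'$ is computable, and $\tau$ is decidable. The crucial point is a \emph{measure coincidence}: on the set where heaven is never triggered the dynamics of $\mu$ and $\nu$ are identical, so $\p^\pi_\mu$ and $\p^\pi_\nu$ agree there, and the positive-measure ``eventually never explore'' set survives under $\nu$. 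On that set heaven stays unrealized, yet at each of the regularly-recurring $\tau$-visits the optimal $\nu$-value is $1$, whereas No Heaven forces $V^\pi_\mu(h_{<t})\le c<1$ for all large $t$. This yields a value gap of at least $1-c$ at a positive density of times, contradicting weak asymptotic optimality in $\nu$. This is exactly where No Heaven is indispensable: without it, $\mu$ could already offer value tending to $1$, heaven would be no improvement, and the agent would be right not to explore.

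Two points need real care and form the heart of the argument. First, for a short (e.g.\ geometric) horizon a single $k$-step detour is discounted by a fixed factor and may simply not be worth taking, so a one-shot heaven need not create a gap. I would circumvent this by a \emph{telescoping induction} over the stages $j=0,\dots,k-1$: with $\tau_j$ the context ``the first $j$ steps of $\pi'$ from $\tau$ have just been completed,'' I build a one-step heaven $\nu_j$ that rewards only the single action advancing $\tau_j$ to $\tau_{j+1}$, so the per-stage gap is again at least $1-c$ regardless of $k$; the base case is the regular occurrence of $\tau_0=\tau$, and each inductive step upgrades ``reaches $\tau_j$ regularly'' to ``reaches $\tau_{j+1}$ regularly,'' delivering the full $k$-step execution after $k$ steps. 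Second, and this is the step I expect to be the main obstacle, $V^\pi_\nu(h_{<t})$ averages over all continuations and is not automatically close to $V^\pi_\mu(h_{<t})$ on a realized non-exploring history, because $\pi$ might trigger heaven counterfactually. I would control this by bounding the value difference $V^\pi_\nu(h_{<t})-V^\pi_\mu(h_{<t})$ by the probability of ever triggering heaven from $h_{<t}$, and then invoking Lévy's $0$--$1$ law: on a history that never triggers heaven this probability tends to $0$, so $V^\pi_\nu(h_{<t})\le c+o(1)$ and the gap genuinely persists.

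Throughout, the weak case replaces the ``infinitely often'' bookkeeping of the strong argument by \emph{density statements}. Cesàro convergence of the value gap to $0$ means the set of timesteps bearing a gap of at least $1-c$ has density $0$; a context that occurs regularly has positive lower density and so cannot be swallowed by a density-$0$ set, which is both what produces the contradiction above and, read in reverse through the telescope, what upgrades ``executes infinitely often'' to ``executes regularly.'' These density manipulations, the telescoping, and the Lévy $0$--$1$ step are routine in spirit but are where the care lies.
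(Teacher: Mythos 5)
Your skeleton is the paper's own: reduce Theorem~\ref{thm:weak} to a weak ``Try Everything'' lemma proved by playing computable heaven environments against the No Heaven Assumption, with the density-vs-Ces\`aro observation doing the work that ``infinitely often'' does in the strong case, and then close with a product bound over non-overlapping $k$-step bursts (the paper bounds $\p^\pi_\mu[\mathcal{H}^\infty \setminus E]$ by $(1-\varepsilon)^{\#\mathrm{bursts}}$; your L\'evy/Borel--Cantelli step is equivalent). Even your telescoping is the paper's device in different clothes: its context $\mathcal{C}^n_m$ consists of histories whose last $m-1$ actions already followed $\pi$, so heaven is always exactly one action away and the optimal value there is $1$ regardless of the discount---your ``discount dilution'' worry is answered the same way. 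The genuine gap is in your heaven construction: you trigger heaven on the \emph{first} completion, while your contradiction set is ``the execution completes only finitely often,'' which includes histories that complete it at least once. On those histories heaven is realized under $\nu$, the measure coincidence between $\p^\pi_\mu$ and $\p^\pi_\nu$ fails, and your L\'evy $0$--$1$ patch cannot rescue it: that patch controls counterfactual \emph{future} triggering entering $V^\pi_\nu$, not a trigger that has already occurred on the realized history (after which the agent's $\nu$-value is $1$ and no gap exists). The paper's missing ingredient is a completion counter: for each $n$, the environment $\nu^n_m$ sends the agent to heaven only upon the $n$-th completion, and the context $\mathcal{C}^n_m$ encodes ``exactly $n-1$ completions so far, with the last $m-1$ steps following $\pi$.'' Recurrence of $\mathcal{C}^n_m$ then \emph{itself} implies heaven is never triggered (one more $\pi$-step would destroy ``exactly $n-1$'' forever), so $\p^{\pi'}_\mu = \p^{\pi'}_{\nu^n_m}$ holds exactly on the contradiction event, and countable additivity over $n$ disposes of ``finitely often.''

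Separately, your strengthened conclusion that the agent executes $\pi^E$ from $\tau$ \emph{regularly} is not just beyond this argument---it is false in general: if the $k$-step execution is suboptimal in $\mu$ by a fixed margin, bursts of positive lower density would keep the Ces\`aro average of $V^*_\mu - V^\pi_\mu$ bounded away from zero, contradicting weak asymptotic optimality in $\mu$ itself. Consequently your inductive upgrade from ``reaches $\tau_j$ regularly'' to ``reaches $\tau_{j+1}$ regularly'' cannot go through: the heaven argument excludes a value gap at a positive density of times, hence (given $\tau_j$ regular) rules out only ``advances finitely often,'' delivering ``infinitely often'' at best, after which the density hypothesis needed at stage $j+1$ is unavailable. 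The paper's lemma accordingly claims only ``infinitely often,'' which is all the theorem needs for the $(1-\varepsilon)$ product. (In fairness, the paper's own induction on $m$ in Appendix~\ref{sec:waoresults} glosses a mirror image of this point---its displayed ``i.o.\ with probability $0$'' claim is derived from a contradiction that only addresses ``regularly''---so the multi-step bookkeeping is delicate there too; but your proposal, by promising positive density at every stage, runs into the obstruction head-on rather than at the margin.)
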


Each of these theorems is proven with its own ``Try Everything'' Lemma. The intuitive role of this lemma in the proof is: if an agent tries everything, one of those things it tries will destroy it, provided destruction is still accessible. From the No Heaven Assumption, for any strongly asymptotically optimal agent, we show
\begin{lemma}[Try Everything -- Strong Version]
For every deterministic computable policy $\pi$, for every decidable context $\mathcal{C}$ that occurs infinitely often ($|\{t: h_{<t} \in \mathcal{C}\}|=\infty$), for every $m \in \mathbb{N}$, a strongly asymptotically optimal agent executes the policy $\pi$ for $m$ consecutive timesteps starting from a context $\mathcal{C}$ infinitely often with probability 1. That is, letting $\pi'$ be a strongly asymptotically optimal policy,
\begin{multline*}
    \p^{\pi'}_\mu \Big(\va\{t: h_{<t} \in \mathcal{C}\}\va=\infty \implies \\ \va\{t: h_{<t} \in \mathcal{C} \wedge \forall k \leq m \ a_{t+k} = \pi(h_{<t+k})\}\va = \infty \Big) = 1
\end{multline*}
\end{lemma}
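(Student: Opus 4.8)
The plan is to reduce the whole statement to the single-step case ($m=1$) and then prove that case with a one-shot ``heaven'' construction combined with the No Heaven Assumption. First I would \emph{induct on $m$}. For the inductive step, suppose the claim holds for $m-1$ and that $\mathcal{C}$ occurs infinitely often. Applying the $m=1$ case to $\mathcal{C}$ shows that the sub-context $\mathcal{C}' := \{ h_{<t+1} : h_{<t}\in\mathcal{C},\ a_t=\pi(h_{<t}) \}$ (which is decidable, since $\pi$ is computable and $\mathcal{C}$ decidable) also occurs infinitely often with $\p^{\pi'}_\mu$-probability $1$; applying the induction hypothesis to $\mathcal{C}'$ and $\pi$ then gives $m-1$ further consecutive executions of $\pi$ from $\mathcal{C}'$ infinitely often. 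Concatenating the initial step with these yields $m$ consecutive executions of $\pi$ from $\mathcal{C}$ infinitely often. So it suffices to handle $m=1$: almost surely, if $\mathcal{C}$ occurs infinitely often then $a_t=\pi(h_{<t})$ at infinitely many $t$ with $h_{<t}\in\mathcal{C}$.

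For $m=1$ I would further reduce ``infinitely often'' to ``at least once after any time $s$''. Fix $s\in\mathbb{N}$ and define a modified environment $\nu_s\in\M$ that agrees with $\mu$ until the first \emph{trigger} time $t\ge s$ with $h_{<t}\in\mathcal{C}$ and $a_t=\pi(h_{<t})$, and thereafter emits reward $1$ forever, \emph{starting at that very step}; this is legitimate because $\nu_s$ observes $a_t$ before producing $r_t$, and $\nu_s$ is computable because $\mu$ is computable, $\mathcal{C}$ is decidable, and $\pi$ is computable. The point of the immediate (rather than delayed) reward is that at any untriggered $h_{<t}\in\mathcal{C}$ with $t\ge s$, playing $\pi(h_{<t})$ secures reward $1$ forever, so $V^*_{\nu_s}(h_{<t})=1$ exactly, independent of the discount schedule.

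The contradiction is then the heart of the argument. Let $D_s$ be the event ``$\mathcal{C}$ occurs infinitely often at times $\ge s$ and the trigger never fires''. On $D_s$ the histories under $\nu_s$ and $\mu$ have identical law, since the deviation never activates, so $\p^{\pi'}_{\nu_s}$ and $\p^{\pi'}_\mu$ agree on $D_s$ and it suffices to show $\p^{\pi'}_{\nu_s}(D_s)=0$. Because $\nu_s$ only ever \emph{adds} reward relative to $\mu$, the value satisfies the key estimate $V^{\pi'}_{\nu_s}(h_{<t}) \le V^{\pi'}_{\mu}(h_{<t}) + q_t$, where $q_t := \p^{\pi'}_{\nu_s}(\text{trigger eventually}\mid h_{<t})$; by L\'evy's zero--one law $q_t\to 0$ almost surely on $D_s$. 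The No Heaven Assumption gives $\limsup_t V^*_\mu(h_{<t}) < 1$, hence, since $V^{\pi'}_\mu\le V^*_\mu$, a (random) $\delta>0$ with $V^{\pi'}_\mu(h_{<t})\le 1-\delta$ eventually; combining these, $\limsup_t V^{\pi'}_{\nu_s}(h_{<t}) \le 1-\delta$ on $D_s$. But on $D_s$ there are infinitely many $\mathcal{C}$-times $t\ge s$ at which $V^*_{\nu_s}(h_{<t})=1$, so $V^*_{\nu_s}(h_{<t})-V^{\pi'}_{\nu_s}(h_{<t})\ge\delta$ infinitely often, contradicting strong asymptotic optimality in $\nu_s$ unless $\p^{\pi'}_{\nu_s}(D_s)=0$. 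Thus $\p^{\pi'}_\mu(D_s)=0$ for every $s$, and a union over $s\in\mathbb{N}$ rules out the agent following $\pi$ from $\mathcal{C}$ only finitely often while $\mathcal{C}$ occurs infinitely often, which is exactly the $m=1$ claim.

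The hard part will be the value bookkeeping in the last paragraph: one must verify that the agent's value under the reward-inflated $\nu_s$ does not secretly climb to $1$ through the mere \emph{possibility} of a future trigger while remaining on the never-trigger event $D_s$ -- this is precisely what L\'evy's zero--one law ($q_t\to0$) controls -- and one must carefully transfer the No Heaven bound across the $\nu_s$-versus-$\mu$ coupling. A secondary subtlety, which is exactly why I would isolate $m=1$ and induct rather than construct heaven $m$ steps deep, is discounting: only an immediate reward of $1$ forces $V^*_{\nu_s}=1$ and hence a clean, discount-independent contradiction, whereas an $m$-step-delayed reward would only guarantee value $\ge\Gamma_{t+m}/\Gamma_t$, which need not exceed the No Heaven threshold $1-\delta$.
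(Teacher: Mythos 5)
Your proof is correct, and it reaches the paper's conclusion by a genuinely different decomposition built around the same core device: a computable ``heaven'' environment that mimics $\mu$ until a trigger tied to executing $\pi$ from $\mathcal{C}$, played against the No Heaven Assumption and strong asymptotic optimality. The paper indexes its heaven environments $\nu^n_m$ by an exact \emph{count} $n$ of completed $m$-step executions, runs the contradiction at the one-step-to-heaven context $\mathcal{C}^n_m$ (which is how it, too, secures a discount-independent optimal value of exactly $1$ --- it never uses an $m$-step-deep heaven from $\mathcal{C}$, so the $\Gamma_{t+m}/\Gamma_t$ worry you raise concerns a construction the paper avoids by a different means than yours), and then disposes of ``exactly $n$ executions'' by an induction on $m$ plus countable additivity over $n$. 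You instead reduce everything to $m=1$ via an outer induction that chains the appended context $\mathcal{C}'$, and you index heavens by a \emph{start time} $s$ with a union over $s$ in place of the count $n$; both bookkeeping schemes are sound, and yours arguably has a cleaner quantifier structure. Notably, your write-up is more careful than the paper's at one real pressure point: transferring the No Heaven bound to the reward-inflated environment. The paper passes from $\limsup_t V^{\pi'}_\mu(h_{<t}) < 1$ to $\limsup_t V^{\pi'}_{\nu^n_m}(h_{<t}) < 1$ with a bare ``therefore,'' silently using that on the never-trigger event the value inflation from a merely \emph{possible} future trigger vanishes asymptotically; your estimate $V^{\pi'}_{\nu_s}(h_{<t}) \le V^{\pi'}_\mu(h_{<t}) + q_t$ with L\'evy's zero--one law giving $q_t \to 0$ makes this explicit, and it is a genuine subtlety rather than pedantry. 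One small blemish: you invoke the No Heaven Assumption for $V^*_\mu$, but as stated it bounds $\limsup_t V^{\pi}_\mu$ for a given policy with probability $1$ under that policy's measure, and a supremum over uncountably many policies need not inherit the bound; fortunately you only need the instance $\pi = \pi'$, i.e.\ $\limsup_t V^{\pi'}_\mu(h_{<t}) < 1$ w.p.$1$, which the assumption gives directly (and is exactly what the paper uses), so the detour through $V^*_\mu$ should simply be deleted.
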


Sketching the proof of the Try Everything Lemma: if a strongly asymptotically optimal agent ``tries something'' only finitely often, it is ignoring the possibility that trying that something one more time yields maximal rewards forever. Since the environment which behaves this way is computable, and since it may be identical to the true environment up until that point, a strongly asymptotically optimal agent cannot ignore this possibility.

\begin{proof}
Let $\mu$ be the true environment. Let $\pi$ be an arbitrary computable deterministic policy. Let $\pi'$ be the strongly asymptotically optimal agent's policy. Let $\nu^n_m$ be the environment which mimics $\mu$ until $\pi$ has been executed for $m$ consecutive timesteps from context $\mathcal{C}$ a total of $n$ times; after that, all rewards are maximal. (By mimic, we mean it outputs observations and rewards with the same probabilities.) Call this event ``the agent going to heaven'' (according to $\nu^n_m$). Let $\mathcal{C}^n_m$ be the set of interaction histories such that according to $\nu^n_m$, executing $\pi$ for one more timestep would send the agent to heaven. Thus, $\mathcal{C}^n_m$ is the set of interaction histories $h_{<t}$ such that there are exactly $n-1$ times in the interaction history where $\pi$ was executed for $m$ consecutive timesteps starting from context $\mathcal{C}$, and for the last $m-1$ timesteps, $\pi$ has been executed, and $h_{<t-(m-1)} \in \mathcal{C}$. See Figure \ref{fig:c35}.
\begin{figure*}
    \centering
    \includegraphics[width=0.8\linewidth]{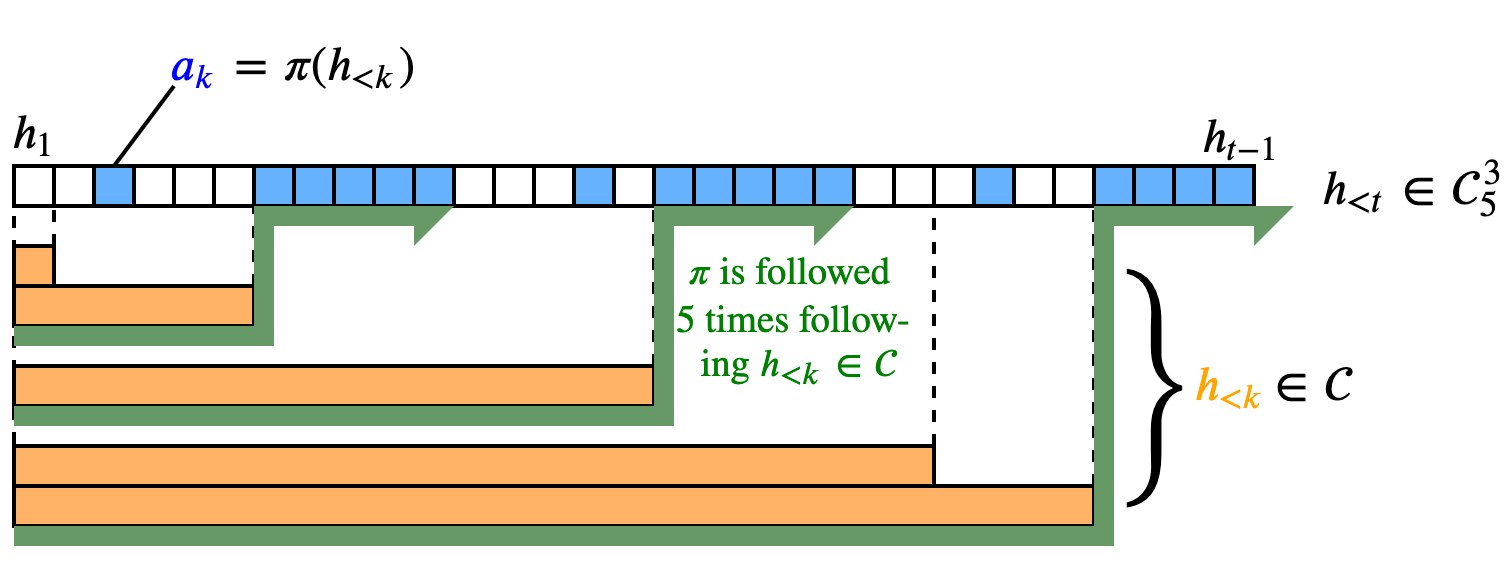}
    \caption{\textbf{An example member of a context} $\mathbf{\mathcal{C}^n_m}$. Each square represents a timestep, colored blue if $a_k = \pi(h_{<k})$. The end of each orange bar indicates that the context $\mathcal{C}$ occurs at that timestep. If $a_t = \pi(h_{<t})$, that would be the third time that $\pi$ will have been executed for 5 timesteps from context $\mathcal{C}$. If $\pi$ is executed for one more step, the agent is sent to heaven according to the environment $\nu^n_m$. To remember the meaning of the subscript $m$ and the superscript $n$, $m$ is the number of timesteps that $\pi$ is executed for, and the subscript position is for timesteps; $n$ is the number of times this happens, and exponentiation denotes an operation being repeated multiple times.}
    \label{fig:c35}
\end{figure*}

\begin{samepage}
The upshot is:
\begin{equation}
    h_{<t} \in \mathcal{C}^n_m \implies V^\pi_{\nu^n_m}(h_{<t}) = 1
\end{equation}
because this is the value of going to heaven. Recall $V^\pi_{\nu^n_m}(h_{<t})$ is the expected future return following policy $\pi$ in environment $\nu^n_m$ after the history $h_{<t}$.
\end{samepage}

We now prove by contradiction that
\begin{equation} \label{eqn:notio}
    \forall n, m \in \mathbb{N} \ \ \p^{\pi'}_\mu \left(h_{<t} \in \mathcal{C}^n_m \ \ \textrm{infinitely often}\right) = 0
\end{equation}
and then we will show that this implies that $\pi$ cannot be executed for $m$ consecutive timesteps from a context $\mathcal{C}$ exactly $n-1$ times.

Suppose the opposite of \ref{eqn:notio}: for some $n$ and $m$, $h_{<t} \in \mathcal{C}^n_m$ infinitely often in an infinite interaction history with positive $\p^{\pi'}_\mu$-probability. (Recall $\pi'$ is the strongly asymptotically optimal policy). If the agent ever executed $\pi$ from the context $\mathcal{C}^n_m$, then that context would not occur again, because there will never again be exactly $n-1$ times in the interaction history that $\pi$ was executed for $m$ consecutive timesteps following the context $\mathcal{C}$; there will be at least $n$ such times. Thus, if $h_{<t} \in \mathcal{C}^n_m$ infinitely often, then $\pi'$ never executes $\pi$ in the context $\mathcal{C}^n_m$. Since $\nu^n_m$ mimics $\mu$ until $\pi$ is executed from $\mathcal{C}^n_m$, and since this never occurs (under this supposition), then $\p^{\pi'}_\mu = \p^{\pi'}_{\nu^n_m}$. By the No Heaven Assumption, $\limsup_{t \to \infty} V^{\pi'}_\mu(h_{<t}) < 1$, and therefore, $\limsup_{t \to \infty} V^{\pi'}_{\nu^n_m}(h_{<t}) < 1$, w.p.1. 

However, for $h_{<t} \in \mathcal{C}^n_m$, $V^*_{\nu^n_m}(h_{<t}) = V^\pi_{\nu^n_m}(h_{<t}) = 1$, so for some $\varepsilon$, the value difference between $V^*_{\nu^n_m}(h_{<t})$ and $V^{\pi'}_{\nu^n_m}(h_{<t})$ is greater than $\varepsilon$ every time $h_{<t} \in \mathcal{C}^n_m$. We supposed that this occurs infinitely often with positive $\p^{\pi'}_\mu$-probability, so it also occurs infinitely often with positive $\p^{\pi'}_{\nu^n_m}$-probability, since the agent never gets sent to heaven according to $\nu^n_m$, so $\mu$ and $\nu^n_m$ behave identically under the policy $\pi'$. Since $\pi$ is computable, and $\mathcal{C}$ is decidable, $\nu^n_m$ is a computable environment, so there is a computable environment for which it is not the case that the value of $\pi'$ approaches the optimal value with probability 1, which contradicts $\pi'$ being strongly asymptotically optimal. Thus, Equation \ref{eqn:notio} does hold: for all $n$ and $m$, with $\p^{\pi'}_\mu$-probability 1, $h_{<t} \in \mathcal{C}^n_m$ only finitely often.

Now suppose by contradiction that in an infinite interaction history, with positive $\p^{\pi'}_\mu$-probability, $\pi$ is executed for $m$ consecutive timesteps from context $\mathcal{C}$ a total of exactly $n$ times.
We show by induction on $m$ that this has probability 0, because it implies that context $\mathcal{C}^{n+1}_m$ occurs infinitely often, which has probability 0 by Equation \ref{eqn:notio}.
First, suppose $m = 1$. After $\pi$ has been executed (for one timestep) from context $\mathcal{C}$ $n$ times, all future interaction history prefixes that belong to $\mathcal{C}$ also belong to $\mathcal{C}^{n+1}_1$. Since context $\mathcal{C}$ occurs infinitely often, so does $\mathcal{C}^{n+1}_1$, contradicting the above.

Now suppose $m > 1$. Our inductive hypothesis is that for $m-1$, with $\p^{\pi'}_\mu$-probability 1, $\pi$ is executed from context $\mathcal{C}$ for $m-1$ consecutive timesteps infinitely often. Once $\pi$ has been executed for $m$ timesteps from context $\mathcal{C}$ $n$ times, as we are supposing by contradiction, every time thereafter that $\pi$ is executed for $m-1$ consecutive timesteps from context $\mathcal{C}$, the interaction history belongs to $\mathcal{C}^{n+1}_m$. By the inductive hypothesis, this occurs infinitely often, so context $\mathcal{C}^{n+1}_m$ occurs infinitely often, contradicting the above. Therefore, the following has $\p^{\pi'}_\mu$-probability 0: ``$\pi$ is executed for $m$ consecutive timesteps from context $\mathcal{C}$ a total of exactly $n$ times''. By countable additivity, the following also has $\p^{\pi'}_\mu$-probability 0: ``$\pi$ is executed for $m$ consecutive timesteps from context $\mathcal{C}$ only finitely many times''. In other words, for all $m$, $\pi$ is executed for $m$ consecutive timesteps from context $\mathcal{C}$ infinitely many times with probability 1.
\end{proof}

And likewise for the weakly asymptotically optimal agent,
\begin{restatable}[Try Everything --- Weak Version]{lemma}{lemweak}
For every deterministic computable policy $\pi$, for every decidable context $\mathcal{C}$ that occurs regularly, for every $m \in \mathbb{N}$, a weakly asymptotically optimal agent executes policy $\pi$ for $m$ consecutive timesteps starting from a context $\mathcal{C}$ infinitely often with probability 1. That is, letting $\pi'$ be a weakly asymptotically optimal policy,
\end{restatable}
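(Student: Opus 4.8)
The plan is to follow the proof of the strong-version lemma almost verbatim, substituting the Ces\`aro notion of convergence for the pointwise one wherever it is used and substituting ``occurs regularly'' for ``occurs infinitely often.'' I would reuse the same two gadgets. For each $n$ let $\nu^n_m \in \mathcal{M}$ be the environment that mimics $\mu$ until $\pi$ has been executed for $m$ consecutive timesteps from the context $\mathcal{C}$ a total of $n$ times and thereafter emits maximal reward, and let $\mathcal{C}^n_m$ be the decidable context of histories from which one further execution of $\pi$ would complete the $n$-th such run. Exactly as before, $h_{<t}\in\mathcal{C}^n_m$ forces $V^\pi_{\nu^n_m}(h_{<t})=V^*_{\nu^n_m}(h_{<t})=1$, whereas the No Heaven Assumption keeps the agent's own value bounded below $1$.

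The first step is the weak analogue of the ``only finitely often'' claim: for every $n,m$, with $\mathbb{P}^{\pi'}_\mu$-probability $1$ the context $\mathcal{C}^n_m$ does \emph{not} occur regularly. I would argue by contradiction as in the strong case, using that regular occurrence implies occurrence infinitely often. If $\mathcal{C}^n_m$ occurred infinitely often then $\pi'$ could never execute $\pi$ from it, since a single such execution would raise the completed-run count to $n$ and a context demanding exactly $n-1$ completed runs can never recur; hence the agent never reaches heaven and $\mathbb{P}^{\pi'}_\mu$ agrees with $\mathbb{P}^{\pi'}_{\nu^n_m}$ on this event. On the event that $\mathcal{C}^n_m$ occurs regularly, the gap $V^*_{\nu^n_m}(h_{<k})-V^{\pi'}_{\nu^n_m}(h_{<k})$ exceeds a fixed $\varepsilon>0$ on a set of times of positive lower density, so its Ces\`aro average is bounded below by $\varepsilon$ times that density and cannot tend to $0$---contradicting weak asymptotic optimality in the computable environment $\nu^n_m$.

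I would then induct on $m$, proving the \emph{strengthened} statement that $\pi$ is executed for $m$ consecutive timesteps from $\mathcal{C}$ regularly, which a fortiori yields the stated infinitely-often conclusion. The base case $m=1$ is exactly the hypothesis that $\mathcal{C}$ occurs regularly. For the step, the key structural observation is that $\pi'$ can extend at most one occurrence of each $\mathcal{C}^n_m$ into a completed run; consequently, if the count of completed $m$-runs ever stabilized at some $n^\star$, then every subsequent $(m-1)$-execution from $\mathcal{C}$ would land in $\mathcal{C}^{n^\star+1}_m$, and the inductive hypothesis (that $(m-1)$-executions occur regularly) would make $\mathcal{C}^{n^\star+1}_m$ occur regularly, contradicting the first step.

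The hard part is the density bookkeeping, and specifically the fact that weak asymptotic optimality buys only ``does not occur regularly'' (lower density $0$) rather than the strong version's ``occurs only finitely often.'' These differ precisely when the agent lets its run-count diverge: it then reaches heaven in every $\nu^n_m$ after finitely many steps and incurs zero \emph{asymptotic} Ces\`aro regret there, so the permanent-heaven gadget by itself cannot exclude an agent that executes $\pi$ for $m$ steps infinitely often yet at density $0$. This is exactly the quantity the induction needs to feed forward as a regularity hypothesis, so closing the gap is the crux: one must either argue that regularity genuinely propagates from level $m-1$ to level $m$ (ruling out the divergent-count scenario outright), or replace the one-shot ``heaven'' by a construction that penalizes under-exploration \emph{persistently}, so that a positive density of declined opportunities forces a non-vanishing lower bound on the Ces\`aro regret. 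I expect the bulk of the genuine work, and all of the subtlety absent from the strong proof, to reside here.
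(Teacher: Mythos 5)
Your gadgets and your first step coincide exactly with the paper's Appendix~A proof: the same heaven environments $\nu^n_m$ and contexts $\mathcal{C}^n_m$, the same observation that an agent for which $\mathcal{C}^n_m$ keeps recurring never executes $\pi$ from it (so $\p^{\pi'}_\mu = \p^{\pi'}_{\nu^n_m}$ on that event), and the same conclusion that a value gap exceeding a fixed $\varepsilon$ on a set of times of positive lower density is incompatible with Ces\`aro convergence in the computable environment $\nu^n_m$. Where you diverge, your specific plan fails: the base case of your strengthened induction is wrong, since ``$\mathcal{C}$ occurs regularly'' is a statement about occurrences of the context, not about the agent executing $\pi$ from it---these differ already at $m=1$. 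Worse, the strengthened statement itself (``$m$-step executions occur regularly'') is not just unprovable by this gadget but false in general: weak asymptotic optimality only forces suboptimal behavior onto a density-zero set of times, and correspondingly permits (for $\pi$ suboptimal in $\mu$, essentially forces) the agent to confine executions of a fixed $\pi$ to density zero. So regularity cannot be fed forward through the induction, and your second suggested escape route (a gadget that penalizes under-exploration persistently rather than via one-shot heaven) is the kind of new construction that would actually be needed.

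That said, the crux you flag is genuine---and, worth knowing, the paper does not close it either. Its proof derives precisely ``with probability 1, $\mathcal{C}^n_m$ does not occur regularly,'' then asserts without further argument that $h_{<t} \in \mathcal{C}^n_m$ only finitely often, i.e.\ $\p^{\pi'}_\mu\left(\mathcal{C}^n_m \ \textrm{i.o.}\right) = 0$, and declares ``the rest of the proof is identical to that of the strong version.'' The strong proof's base case does survive the weakening (if the run count stalls, every later occurrence of the regularly-occurring $\mathcal{C}$ lies in $\mathcal{C}^{n+1}_1$, so $\mathcal{C}^{n+1}_1$ occurs regularly---contradiction), but the inductive step consumes the infinitely-often version: the inductive hypothesis supplies only infinitely many $(m-1)$-executions, hence only ``$\mathcal{C}^{n+1}_m$ occurs i.o.,'' which the density-zero scenario does not contradict. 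This is exactly the stalled-count-with-density-zero-prefixes case you describe. One way to close it: on the stall event, since $\mathcal{C}$ occurs with positive lower density, there is a positive-density set of times from which heaven in $\nu^{n+1}_m$ remains reachable within a bounded delay, so $V^*_{\nu^{n+1}_m}(h_{<t})$ is near $1$ on a positive-density set while $\limsup_t V^{\pi'}_{\nu^{n+1}_m}(h_{<t}) < 1$ by No Heaven, again violating Ces\`aro convergence; but note this requires a uniform lower bound on ratios $\Gamma_{t+k}/\Gamma_t$ (e.g.\ geometric discounting), an assumption the paper explicitly claims not to make. So your refusal to paper over the ``not regularly'' versus ``finitely often'' distinction identifies a real leap that the published proof makes silently.
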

\vspace{-4ex}
\begin{multline*}
    \p^{\pi'}_\mu \Big(\liminf_{t \to \infty} \sum_{k=1}^t [\![h_{<k} \in \mathcal{C}]\!] / t > 0 \implies \\ \va\{t: h_{<t} \in \mathcal{C} \wedge \forall k \leq m \ a_{t+k} = \pi(h_{<t+k})\}\va = \infty \Big) = 1
\end{multline*}

The proof is nearly identical to that of the strong version, and is in Appendix \ref{sec:waoresults}. The proofs of the main theorems follow straightforwardly:

\begin{proof}[Proof of Theorem \ref{thm:strong}]
If $E$ becomes inaccessible, the theorem is satisfied, so suppose $E$ is accessible infinitely often. Let $\pi^E$, $\tau$, $k$, and $\varepsilon$ be the objects that exist by that definition. By the Try Everything Lemma, $\pi^E$ is executed from context $\tau$ for $k$ consecutive timesteps infinitely often. Some of these $k$-step executions may overlap, so let's restrict attention to an infinite subset of non-overlapping $k$-step executions of $\pi^E$. Each time this occurs, the probability of $E$ \textit{not} happening goes down by a factor of at least $1-\varepsilon$, so $E$ happens with probability 1. Formally,
\begin{equation}
    \p^\pi_\mu[\mathcal{H}^\infty \setminus E] \leq (1-\varepsilon)^{|\{t : h_{<t-k} \in \tau \wedge \forall 0 < j \leq k \ a_{t-j} = \pi^E(h_{<t-j})\}|} = 0
\end{equation}
\end{proof}
The proof of Theorem \ref{thm:weak} is functionally identical to that of Theorem \ref{thm:strong}.

\section{Discussion} \label{sec:discuss}

It is well-known that agents designed to have sublinear regret in ergodic MDPs fall into traps if traps exist (a subset of states from which the remaining states are inaccessible). Asymptotic optimality is a much weaker performance result than sublinear regret---the former only requires that an agent eventually does as well as is possible from where it is, whereas the latter requires that it eventually does as well as was possible from the beginning. The fact that even asymptotic optimality dooms an agent is a more substantial result.

One of the authors wondered as a child whether jumping from a sufficient height would enable him to fly. He was not crazy enough to test this, and he certainly did not think it was likely, but it bothered him that he could never resolve the issue, and that he might be constantly incurring a huge opportunity cost. Although he did not know the term ``opportunity cost'' or the term ``asymptotic optimality'', this was when he first realized that asymptotic optimality was out of the picture for him, because exploration is fundamentally dangerous.

All three agents described in Appendix \ref{sec:optimalagents} have very interesting ways of exploring. They all get them destroyed or incapacitated. (They satisfy the Try Everything Lemma regardless of whether there is an accessible heaven). It is interesting to note that AIXI, a Bayes-optimal reinforcement learner in general environments, is not asymptotically optimal \citep{orseau2010optimality}, and indeed, may cease to explore \citep{leike2015bad}. Depending on its prior and its past observations, AIXI may decide at some point that further exploration is not worth the risk. Given our result, this seems like reasonable behavior.

\begin{example}[Bayesian Agent Stops Exploring]
AIXI considers all computable world-models, but for simplicity we consider a version with a posterior over many fewer models. Let $\nu_i$ be a deterministic model which outputs no observations. If the latest action was $0$, it outputs a reward of $1/2$; if the latest action was $1$, and the timestep $t \geq i$, it outputs a reward of $1$, but if $t < i$ it outputs a reward of $0$. Define $\nu_\infty$ likewise. The agent begins with a prior $w(\nu_\infty) = 1/2$; and for $i \in \mathbb{N}$ (including 0), $w(\nu_i) = \frac{1}{2(i+1)(i+2)}$. Let the agent have a discount factor of 0.9. If a reward of $1$ has never been seen, picking action $0$ is exploiting; it is most likely to be the best action. For a set $S \subset \mathbb{N}$, let $\pi_S$ be the policy which outputs $1$ for $t \in S$, and if it ever gets a reward of $1$, it always outputs $1$ thereafter. Otherwise, it outputs $0$ for $t \notin S$. $\pi_\emptyset$ always exploits. We don't find the Bayes-optimal policy, but we show that $\pi_{\{0\}}$ is Bayes-better than $\pi_\emptyset$, whereas for $S \ni 100$ and $n > 100$, $\pi_{S \cup \{n\}}$ is Bayes-worse than $\pi_S$.
\begin{align*}
    &\sum_{i \in \mathbb{N} \cup \{\infty\}} \hspace{-4mm} w(\nu_i) V^{\pi_{\{0\}}}_{\nu_i} = w(\nu_0) \cdot 1 + (1 - w(\nu_0)) (1 - \gamma) * \\ &(0 + \sum_{t=1}^\infty \gamma^t \cdot 0.5)
    = 0.25 + 0.75 \cdot 0.9 \cdot 0.5 = 0.5875 >
    \\
    &0.5 = \sum_{i \in \mathbb{N} \cup \{\infty\}} w(\nu_i) 0.5 = \sum_{i \in \mathbb{N} \cup \{\infty\}} w(\nu_i) V^{\pi_{\emptyset}}_{\nu_i}
    \tagaligneq
\end{align*}
Thus, early in its lifetime, the Bayes-optimal agent must explore. However, suppose the agent took action $1$ at $t=100$, and got a reward of $0$. This falsifies $\nu_i$ for $i \leq 100$. For $n > 100$, if action $0$ is taken from $t=100$ to $n-1$, then $w(\nu_\infty | h_{<n}) = \frac{204}{103} \frac{1}{2} = \frac{102}{103}$, and for $i > 100$, $w(\nu_i | h_{<n}) = \frac{204}{103} \frac{1}{2(i+1)(i+2)}$. Then,
\begin{align*}
    &\hspace{-4mm} \sum_{\hspace{4mm} i \in \mathbb{N} \cup \{\infty\}} \hspace{-6.5mm} w(\nu_i | h_{<n}) V^{\pi_{S \cup \{n\}}}_{\nu_i}\hspace{-1mm}(h_{<n}) \leq w(\nu_\infty | h_{<101}) V^{\pi_{S \cup \{n\}}}_{\nu_\infty}\hspace{-1mm}(h_{<n}) + \\
    &(1 - w(\nu_\infty | h_{<n})) \cdot 1 =
    \frac{102}{103}(0.5 \cdot 0.9) + \frac{1}{103} < 0.5 = \\ &\sum_{i \in \mathbb{N} \cup \{\infty\}} w(\nu_i | h_{<n}) 0.5 = \sum_{i \in \mathbb{N} \cup \{\infty\}} w(\nu_i | h_{<n}) V^{\pi_{S}}_{\nu_i}(h_{<n})
    \tagaligneq
\end{align*}
Thus, if the Bayes-optimal agent explores at $t = 100$ (or after), further exploration is so unlikely to pay off, that is not worth the foregone reward.
\end{example}

These negative results are bleak to the field of safe exploration, which we discuss in the next section in our review of the literature on the topic.

\section{Approaches to Safe Exploration} \label{sec:safeexploration}

Dangerous environments, a subset of non-ergodic environments where agent-destruction is accessible infinitely often, demand new priorities when designing an agent, and in particular, when designing an exploration regime. Many of these examples of safe exploration come from \citet{amodei_olah_2016} and \citet{garcia2015comprehensive}.
\begin{itemize}
    \item use risk-sensitive performance criteria
    \begin{itemize}
        \item maximize the probability the future reward is not minimal \citep{heger1994consideration}
        \item given a confidence interval regarding the transition dynamics, maximize the minimal expected future reward \citep{nilim2005robust}
        \item exponentiate the cost \citep{borkar2010learning}
        \item add a cost for risk \citep{mihatsch2002risk}
        \item constrain the variance of the future reward \citep{di2012policy}
    \end{itemize}
    \item use demonstrations
    \begin{itemize}
        \item copy an expert \citep{abbeel2004apprenticeship,SyedSchapire08,ho2016generative,ross2011reduction}
        \item ``ask for help'' when
        \begin{itemize}
            \item the minimum and maximum Q-value are close \citep{clouse1997integrating}
            \item there is a high probability of getting a reward below a threshold \citep{hans2008safe}
            \item no ``known'' states are ``similar'' to the current state \citep{garcia2012safe} or that may soon be the case \citep{garcia2013safe}
        \end{itemize}
        \item a teacher intervenes at will \citep{clouse1992teaching,maclin1998creating,saunders2018trial}
    \end{itemize}
    \item simulate exploration
    \begin{itemize}
        \item for driving agents, e.g. \citep{pan2017virtual}
    \end{itemize}
    \item do bounded exploration
    \begin{itemize}
        \item only take actions that probably allow returning to the current state \citep{moldovan2012safe}
        \item only take actions that probably lead to states that are ``similar'' to observed states \citep{turchetta2016safe}
    \end{itemize}
\end{itemize}
Our paper could be thought of as a fundamental negative result in the field of safe exploration. We are unaware of other significant negative results in the field. Most importantly, our result suggests a need for those of us studying safe exploration to pin down what exactly we are trying to achieve, since familiar desiderata are unsuitable. Some research can be experimental rather than formal, but in the absence of knowing what formal results are even on the table, there is a sense in which even empirical work will be deeply aimless. We offer such a formal result for our agent Mentee in the next section.

\section{Mentee}\label{sec:mentee}

We now introduce an idealized Bayesian reinforcement learner whose exploration is guided by a mentor. We do not call the mentor an expert, because the results do not depend on the mentor being anywhere near optimal. It exploits by maximizing the expected discounted reward according to a full Bayesian belief distribution (hence, ``idealized''). And to explore, it defers to a mentor, who then selects an action given the interaction history; what remains to be defined is \textit{when} to defer, which proves to be a surprisingly delicate design choice. We show that our agent ``Mentee'' learns to accumulate reward at least as well as the mentor, provided it has a bounded $\varepsilon$-effective horizon for all $\varepsilon > 0$. One motivating possibility is that the mentor could be a human. Thus, we have found a substantive theoretical performance guarantee other than asymptotic optimality for the field of safe exploration to consider.

Whatever it is we are concerned about happening through reckless exploration, we want to be able to trust the mentor not to cause such a thing. Otherwise, there would be no point in outsourcing exploration to a mentor. Depending on what the most worrisome failure modes are, the search for a trustworthy mentor may look different. If the task is flying, our mentor had better a pilot, and if the task is surgery, a surgeon. Alternatively, if we have existing agents which we trust are safe (in some task-specific sense), but may still be suboptimal, that agent could be Mentee's mentor, and Mentee could learn to outperform it without self-directed exploration.

The other main piece of formal work in this setting is \citet{kosoy2019delegative}. They study a fully observable finite state Markov setting, and show that when the mentor always has a positive probability of picking the best action, the agent achieves finite regret. Their agent only takes a given action from a given state if it has seen the mentor do that previously. Since we consider environments that are not finite-state, this approach is unavailable to us.

Many works that include a human mentor or teacher frame their work as achieving safe exploration, and we have reviewed those works above. But other uses of human mentor in RL have been explored. For example, \citet{thomaz2006reinforcement} study a human mentor biasing the agent's Q value estimates by hand (toward better actions). \citet{abel2017agent} and \citet{saunders2018trial} propose letting the mentor prune dangerous actions on the fly. If we can trust the mentor the recognize risky actions as they arise, this is a better solution than our agent; like keyhole surgery, this approach minimally disrupts an otherwise successful agent. We believe, however, that in many complex environments, the mentor may not take some very dangerous action-sequences by virtue of their complexity and unfamiliarity, even while unable to recognize those action-sequences as dangerous. Human mentorship can also naturally make learning much easier, simply through demonstrations \citep{atkeson1997robot}, or by directly labelling some optimal actions \citep{judah2010reinforcement}.

\subsection{Agent definition}

The definition of the exploration probability (the probability that Mentee defers to the mentor) is very similar to the information-thoeretic exploration probability for the strongly asymptotically optimal agent Inq \citep{cohen2019strong}. It also resembles \citepos{Hutter:20bomai} myopic agent which explores by deferring to a mentor; our non-myopic agent requires a more intricate exploration schedule.

Mentee begins with a prior probability distribution regarding the identity of the mentor's policy. With a countable or finite model class $\mathcal{P}$, for a policy $\pi \in \mathcal{P}$, let $w(\pi)$ denote the prior probability that the mentor's policy is $\pi$. We assume that the true policy $\pi^h$ is in $\mathcal{P}$ and we construct the prior distribution over $\mathcal{P}$ to have finite entropy.

Mentee also begins with a prior probability distribution regarding the identity of the environment. With the model class $\mathcal{M}$, for an environment $\nu \in \mathcal{M}$, let $w(\nu)$ denote the prior probability that $\nu$ is the true environment. Recall that $\mathcal{M}$ is the set of all environments with computable probability distributions. We construct the prior distribution over $\mathcal{M}$ to also have finite entropy.

Let $e_t$ denote whether timestep $t$ is exploratory, that is, whether the action is selected by the mentor. Once we define the exploration probability $\beta(h_{<t})$, we will let $e_t \sim \mathrm{Bern}(\beta(h_{<t}))$. We abuse notation slightly, and we let $h_t$ be a quadruple, not a triple: $h_t := e_t a_t o_t r_t$.

The prior distribution over environments is updated into a posterior as follows, according to Bayes' rule.
\begin{equation} \label{eqn:posterior}
    w(\nu | h_{<t}) :\propto w(\nu) \prod_{k < t} \nu(o_k r_k | h_{<k} a_k)
\end{equation}
normalized so that $\sum_{\nu \in \mathcal{M}} w(\nu | h_{<t}) = 1$, and $w$ is a probability mass function.

Mentee updates the posterior distribution over the mentor's policy only after observing an action chosen by the mentor; this is intuitive enough, but it makes the definitions a bit messy. The posterior assigned to a policy $\pi$ is defined
\begin{equation}
    w(\pi|h_{<t}) :\propto w(\pi)\prod_{k<t : e_k = 1}\pi(a_k|h_{<k})
\end{equation}
normalized in the same way. We let $w(\pi, \nu|h_{<t})$ denote $w(\pi|h_{<t})w(\nu|h_{<t})$. So technically, $w$ is a joint probability distribution over $\Pi \times \mathcal{M}$, and we usually consider the marginal distributions over $\Pi$ and $\mathcal{M}$, which are independent.

The information gain value of an interaction history fragment is how much it changes Mentee's posterior distribution, as measured by the KL-divergence. Letting $h' \in \mathcal{H}^*$ be a fragment of an interaction history in which all $e_k = 1$ (so the actions are selected by the mentor), the information gain is defined,
\begin{equation}
    \IG(h'|h_{<t}) := \sum_{\nu \in \mathcal{M}} \sum_{\pi \in \mathcal{P}} w(\nu, \pi|h_{<t}h')\log \frac{w(\nu, \pi|h_{<t}h')}{w(\nu, \pi|h_{<t})}
\end{equation}

To define \textit{expected} information gain, we need the Bayes' mixture policy and environment:
\begin{equation}
    \overline{\pi}(\cdot | h_{<t}) := \sum_{\pi \in \mathcal{P}} w(\pi | h_{<t}) \pi(\cdot | h_{<t})
\end{equation}
and
\begin{equation}
    \xi(\cdot | h_{<t}) := \sum_{\nu \in \mathcal{M}} w(\nu | h_{<t}) \nu(\cdot | h_{<t})
\end{equation}

Now, we can define the expected information gain value of mentorship for $m$ timesteps.
\begin{equation}
    V^{\IG}_{m, 0}(h_{<t}) := \E^{\overline{\pi}}_{\xi} \left[\IG(h_{t:t+m-1} | h_{<t}) \vb e_{t:t+m-1} = 1^m \right]
\end{equation}

$1^m$ is a string of m 1's, and recall that $\E^{\overline{\pi}}_{\xi}$ means that $h_{t:t+m-1}$ is sampled from $\p^{\overline{\pi}}_{\xi}$. We also require recent values of the expected information gain value, so we let $V^{\IG}_{m, k}(h_{<t}) := V^{\IG}_{m, 0}(h_{<t-k})$ for $k \leq t$. $V^{\IG}_{m, k}(h_{<t})$ denotes the attainable information gain from $k$ timesteps ago to $m$ timesteps from then.

\begin{example}[Calculating Expecting Information Gain with a Simple Continuous Model Class]
Our setting considers a countable model class, for which the expected information gain is simple, if tedious, to approximate to a desired tolerance with a finite subset of models. The following simple continuous setting may give more intuition about the nature of the expected information gain. Consider a two-armed bandit problem, and an agent with independent uniform priors over $\theta_1$ and $\theta_2$, the probability of receiving a reward of 1 following action $a_1$ and $a_2$ respectively (and 0 is the only other possible reward). Let $n_1^+$ and $n_1^-$ be the number of reward-1 events and reward-0 events respectively following $a_1$, and likewise for $n_2^+$ and $n_2^-$, and let $n_1$ and $n_2$ be the total counts. The agent's posteriors over the $\theta_i$ at any time will be $\textrm{Beta}(n_i^+ + 1, n_i^- + 1)$.

One can show, with $\psi$ being the digamma function,
\begin{multline}
\KL\left(\textrm{Beta}(\alpha + 1, \beta) \va \va \textrm{Beta}(\alpha, \beta)\right) = \\ \ln \frac{\alpha + \beta}{\alpha} + \psi(\alpha + 1) - \psi(\alpha + \beta + 1)
\end{multline}
and $\alpha$ and $\beta$ can be flipped for $\KL\left(\textrm{Beta}(\alpha, \beta + 1) \va \va \textrm{Beta}(\alpha, \beta)\right)$. Thus, the one-step expected information gain from taking action $a_i$ is
\begin{multline*}
\hspace{-3mm}\ln(n_i + 3) - \psi(n_i + 3) + \hspace{-1mm} \sum_{\circ \in \{+, -\}} \hspace{-0.5mm} \frac{n^\circ_i + 1}{n_i + 2} (\psi(n^\circ_i + 2) - \ln(n^\circ_i + 1)) \\ \in \Theta(1 / n_i)
\end{multline*}
\end{example}

We are now prepared to define the exploration probability:
\begin{equation}
    \beta(h_{<t}) := \sum_{m \in \mathbb{N}} \!\!\! \sum_{k = 0}^{\min\{m-1, t\}} \!\!\! \frac{1}{m^2(m+1)} \min \left\{1, \frac{\eta}{m} V^{\IG}_{m, k}(h_{<t})\right\}
\end{equation}
where $\eta$ is an exploration constant. The first term in the minimum is to ensure $\beta(h_{<t}) \leq 1$. As mentioned, this is very similar to Inq's exploration probability. The differences are that Inq is not learning a mentor's policy, so the only information Inq gains regards the identity of the environment $\nu$, and second, Inq's information gain value regards the expected information gain from following the policy of a knowledge seeking agent \citep{Hutter:13ksaprob} rather than from following an estimate of the mentor's policy.

Finally, when not deferring to the mentor, Mentee maximizes expected reward according its current beliefs. It's exploiting policy is:
\begin{equation} \label{eqn:pistar}
    \pi^*(\cdot | h_{<t}) \in \argmax_{\pi}{V^\pi_{\xi}(h_{<t})}
\end{equation}
Ties in the $\argmax$ are broken arbitrarily. By \citet{Hutter:14tcdiscx}, an optimal deterministic policy always exists. See \citet{Hutter:18aixicplexx} for how to calculate such a policy.

Letting $\pi^h$ be the mentor's policy ($h$ for ``human''), we define
\begin{definition}[Mentee's policy $\pi^M$]
    \begin{equation*}
        \pi^M(\cdot | h_{<t}) := \beta(h_{<t})\pi^h(\cdot | h_{<t}) + (1- \beta(h_{<t}))\pi^*(\cdot | h_{<t})
    \end{equation*}
\end{definition}
Note that Mentee samples from $\pi^h$ not by computing it, but deferring to the mentor. An algorithm is provided for Mentee in Appendix \ref{sec:menteealg}; it simply computes the quantities in Equations \ref{eqn:posterior}-\ref{eqn:pistar} to a desired precision.

Even for a simple model class, it is hard to give a clarifying and simple closed form for the exploration probability, but it is easy to provide a somewhat clarifying upper bound for the information gain value. Regardless of $m$ and $k$, $V^{\IG}_{m, k}(h_{<t})$ is bounded by the entropy of the posterior; this is not a particularly tight bound, since the former goes to zero, while the latter does not in general.

\subsection{Mentor-level Reward Acquisition}

We now state the two key results regarding Mentee's performance: that the probability of defering to the mentor goes to $0$, and the value of Mentee's policy approaches at least the value of the mentor's policy (while possibly surpassing it). The proofs follow in \S \ref{subsec:menteeproofs}; they are substantially similar to parts of the proof that \citepos{cohen2019strong} Inq is strongly asymptotically optimal. For completeness, we include in Appendix \ref{sec:inqproof} parts of that proof that we make use of here.

Assuming a bounded effective horizon (i.e. $\forall \varepsilon > 0 \ \exists m \ \forall t : \Gamma_{t+m}/\Gamma_{t} < \varepsilon$), recalling $\mu$ is the true environment,

\begin{restatable}[Limited Exploration]{theorem}{limitedexploration} \label{thm:limexp}
$$\beta(h_{<t}) \to 0 \ \ \textrm{w.$\p^{\pi^M}_\mu$-prob.1}$$
\end{restatable}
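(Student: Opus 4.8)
The plan is to prove $\beta(h_{<t})\to 0$ by showing that every summand in the definition of $\beta$ vanishes along the trajectory and that the series converges uniformly enough to interchange limit and sum. First I would record that the weights make the series summable independently of the history: the inner sum over $k$ has at most $m$ terms, each at most $\frac{1}{m^2(m+1)}$ because of the $\min\{1,\cdot\}$, so the contribution of all $m>M$ is at most $\sum_{m>M}\frac{1}{m(m+1)}=\frac{1}{M+1}$, uniformly in $t$. Hence given $\epsilon$ I can pick $M$ killing the tail, and it remains to handle the finitely many terms with $m\le M$. Since $V^{\IG}_{m,k}(h_{<t})=V^{\IG}_{m,0}(h_{<t-k})$, it suffices to prove that for each fixed $m$, $V^{\IG}_{m,0}(h_{<t})\to 0$ with $\p^{\pi^M}_\mu$-probability $1$; the $\min\{1,\frac{\eta}{m}V^{\IG}_{m,0}\}$ factor then tends to $0$ as well.

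The core estimate is that only finitely much information can ever be gained. Because the priors over $\mathcal{P}$ and $\mathcal{M}$ are constructed to have finite entropy, the total expected information gain about the pair $(\pi^h,\mu)$ is bounded by the finite joint prior entropy. By the chain rule for KL divergence, the $m$-step expected gain $V^{\IG}_{m,0}$ is at most the sum of the one-step expected gains accrued along the exploratory rollout, so it suffices to control one-step gains and then sum $m$ of them. Because $\mu\in\mathcal{M}$ and $\pi^h\in\mathcal{P}$ carry positive prior mass, $\p^{\pi^M}_\mu$ is absolutely continuous on each cylinder with respect to the Bayes-mixture measure $\p^{\overline{\pi}}_{\xi}$; this lets almost-sure conclusions transfer from the mixture to the truth, and it guarantees that as the posteriors concentrate, the mixture predictions $\overline{\pi},\xi$ converge on-policy to $\pi^h,\mu$, so $V^{\IG}_{m,0}$ faithfully measures genuine remaining uncertainty rather than an artifact of a poor prior.

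The convergence itself comes from the self-correcting feedback built into $\beta$. If $V^{\IG}_{m,0}(h_{<t})\ge\delta$, then the single term for this $m$ already forces $\beta(h_{<t})\ge c(\delta,m)>0$, so the mentor is consulted with non-negligible probability at such times, and each consultation realizes, in expectation, a commensurate amount of actual information gain. Since the cumulative realized information is almost surely finite, a conditional Borel--Cantelli / summable-series argument shows that times with $V^{\IG}_{m,0}(h_{<t})\ge\delta$ occur only finitely often; letting $\delta\downarrow 0$ over a countable sequence gives $V^{\IG}_{m,0}(h_{<t})\to 0$ a.s. Combining this with the uniform tail bound yields $\beta(h_{<t})\to 0$.

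I expect the main obstacle to be the bookkeeping that links the \emph{hypothetical} quantity $V^{\IG}_{m,0}$---an expectation computed under the mixture while \emph{conditioning on $m$ forced mentor actions}---to the information actually realized along the trajectory, where the mentor is consulted only stochastically (with probability $\beta$) and for runs that need not last $m$ consecutive steps. Handling this requires the multi-step-to-single-step reduction above together with careful accounting of the fact that $\beta$ appears on both sides of the key inequality, so the estimate must be arranged as a self-bounding inequality rather than a one-sided bound. This is precisely the machinery developed for Inq's exploration schedule, reproduced in Appendix \ref{sec:inqproof}, which I would invoke to close the argument.
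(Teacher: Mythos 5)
Your proposal is correct and takes essentially the same route as the paper: a uniform tail bound in $m$ from the $\frac{1}{m^2(m+1)}$ weights, plus a finite-prior-entropy budget combined with the self-bounding observation that a large hypothetical gain $V^{\IG}_{m,0}$ forces exploration (with the $k$-shifted terms sustaining the burst so that the full $m$-step mentor rollout occurs with probability at least $\rho(h_{<t},m,0)^m$), which is exactly the paper's Lemma \ref{lemexppart0mentee} that $\E^{\pi^M}_\mu \sum_t \rho(h_{<t},m,0)^{m+1} < \infty$, whence $\rho \to 0$ almost surely for each $m$. The only cosmetic difference is that the paper never performs your one-step chain-rule reduction; it works directly with $m$-step gains on non-overlapping blocks $t \in m\mathbb{N}+i$ and telescopes posterior ratios against the prior entropy---precisely the Appendix \ref{sec:inqproof} machinery you invoke to close the self-referential estimate.
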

and
\begin{restatable}[Mentor-Level Reward Acquisition]{theorem}{mentorlevel} \label{thm:mentorlevel}
$$\liminf_{t \to \infty} V^{\pi^M}_\mu(h_{<t}) - V^{\pi^h}_\mu(h_{<t}) \geq 0 \ \ \textrm{w.$\p^{\pi^M}_\mu$-prob.1}$$
\end{restatable}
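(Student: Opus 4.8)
The plan is to adapt the structure of the Inq strong-asymptotic-optimality proof, replacing the step that equates the $\xi$-optimal value with the true optimal value by a comparison against the mentor. Writing $\overline{\pi}$ and $\xi$ for the Bayes-mixture mentor-policy and environment, I would establish, along the realized history $h_{<t}$ and up to additive errors that vanish with $\p^{\pi^M}_\mu$-probability $1$, the chain
$$V^{\pi^M}_\mu(h_{<t}) \approx V^{\pi^M}_{\xi}(h_{<t}) \gtrsim V^{\pi^*}_{\xi}(h_{<t}) \geq V^{\overline{\pi}}_{\xi}(h_{<t}) \approx V^{\pi^h}_\mu(h_{<t}).$$
Taking $\liminf_{t\to\infty}$ of the difference of the two ends then yields the theorem. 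The middle inequality $V^{\pi^*}_{\xi} \geq V^{\overline{\pi}}_{\xi}$ is immediate, since $\pi^*$ is by definition $\xi$-optimal while $\overline{\pi}$ is just one admissible policy; the work is in the three remaining relations.

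The leftmost relation, $V^{\pi^M}_\mu - V^{\pi^M}_{\xi} \to 0$, is on-policy value merging for a fixed policy. Since $\mu \in \mathcal{M}$ has positive prior weight, $\sum_t \KL(\mu(\cdot \mid h_{<t}a_t)\,\|\,\xi(\cdot \mid h_{<t}a_t))$ is bounded by the (finite) prior entropy, so $\xi$ merges with $\mu$ along the actions actually generated by $\pi^M$; because the effective horizon is bounded, the value of the fixed policy $\pi^M$ can be approximated by a finite $m$-step rollout, and on-policy merging of the $m$-step predictive closes the gap. This is one of the lemmas reproduced from the Inq proof in Appendix \ref{sec:inqproof}, which I would cite rather than reprove.

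The rightmost relation, $V^{\overline{\pi}}_{\xi} - V^{\pi^h}_\mu \to 0$, is the conceptual heart and is where the mentor enters. From Theorem \ref{thm:limexp} we have $\beta(h_{<t}) \to 0$, and since $\beta$ is a sum of nonnegative terms each dominated by $\beta$, every fixed-$m$ information-gain value obeys $V^{\IG}_{m,0}(h_{<t}) \to 0$. Now $V^{\IG}_{m,0}$ is the expected KL-divergence of the posterior update induced by following the mentor for $m$ steps, i.e. the mutual information between the unknown pair $(\nu,\pi)$ and the next $m$ mentor-driven interactions; its vanishing forces, via Pinsker, the agent's predicted $m$-step mentor trajectory $\p^{\overline{\pi}}_{\xi}$ to merge in total variation with the true trajectory $\p^{\pi^h}_\mu$ --- the essential point being that the true pair $(\pi^h,\mu)$ retains positive posterior weight forever, as it is never falsified. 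Bounded effective horizon again turns $m$-step trajectory closeness into closeness of values, giving $V^{\overline{\pi}}_{\xi}(h_{<t}) \to V^{\pi^h}_\mu(h_{<t})$.

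The remaining relation $V^{\pi^M}_{\xi} \gtrsim V^{\pi^*}_{\xi}$ --- that the reward Mentee forgoes by deferring to the mentor rather than always exploiting vanishes --- is the main obstacle. Since $\pi^M$ agrees with $\pi^*$ except on deferral steps, a standard coupling bound gives $V^{\pi^*}_{\xi}(h_{<t}) - V^{\pi^M}_{\xi}(h_{<t}) \leq \E^{\pi^M}_{\xi}\big[\sum_{s \geq t}(\gamma_s/\Gamma_t)\,\beta(h_{<s}) \mid h_{<t}\big]$. The difficulty is that $\beta \to 0$ is known only along the realized $\p^{\pi^M}_\mu$-history, whereas this bound is an expectation over counterfactual $\xi$-rollouts, on which the realized convergence of $\beta$ cannot be invoked directly. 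I would resolve this as the Inq proof does: the total expected information gain under the mixture dynamics is bounded by the finite prior entropy, so the tail of the expected future information gain --- and hence, through the definition of $\beta$, the expected future deferral budget inside the effective horizon --- tends to $0$ as $t \to \infty$, making the exploration cost $o(1)$. Assembling the four relations and taking $\liminf$ completes the proof.
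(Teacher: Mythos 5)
Your proposal is correct in substance and relies on the same toolkit as the paper --- the entropy inequality converting expected information gain into prediction error, the bounded effective horizon converting $m$-step prediction merging into value merging, and the $\xi$-optimality of $\pi^*$ as the pivot --- but your decomposition is genuinely different, and the difference matters at exactly the step you flag as the obstacle. The paper's chain never compares $V^{\pi^M}_\xi$ with $V^{\pi^*}_\xi$: it proves $V^{\pi^h}_\mu - V^{\pi^h}_\xi \to 0$ (On-Mentor-Policy Convergence, with $\pi^h$ on \emph{both} sides rather than your $\p^{\overline{\pi}}_\xi$-vs-$\p^{\pi^h}_\mu$ merging) and $V^{\pi^*}_\mu - V^{\pi^*}_\xi \to 0$ (On-Policy Convergence, via Blackwell--Dubins for $\pi^M$ plus $\beta \to 0$), uses $V^{\pi^*}_\xi \geq V^{\pi^h}_\xi$, concludes by contradiction that $V^{\pi^h}_\mu - V^{\pi^*}_\mu > \varepsilon$ only finitely often, and only at the very end transfers from $\pi^*$ to $\pi^M$. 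Because that last transfer compares values under $\mu$-rollouts, the realized almost-sure convergence $\beta(h_{<t}) \to 0$ from Theorem \ref{thm:limexp} applies directly (conditional dominated convergence within the effective horizon), so no counterfactual exploration-budget lemma is ever needed. Your route instead requires controlling $\E^{\pi^M}_\xi$ of future $\beta$, and your proposed fix is workable but needs care: Lemma \ref{lemexppart0mentee} is indeed proven under the mixture measure ${}^{\pi^*}\!\!\p^{\overline{\pi}}_\xi$, the conditional tail of a nonnegative series with finite expectation vanishes almost surely, and absolute continuity (since $w(\mu, \pi^h) > 0$) transfers this to $\p^{\pi^M}_\mu$-a.s.; but note that the posterior entropy itself does \emph{not} vanish (the paper remarks on this explicitly), so ``tail of expected future information gain tends to $0$'' must come from the series argument, not from an entropy bound at time $t$, and you must still convert summability of the powers $\rho(h_{<t}, m, 0)^{m+1}$ into smallness of the discounted sum of $\beta = \sum_{m,k} \rho$, which requires the $1/(m^2(m+1))$ weights and uniformity over $m$. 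Two simplifications would collapse your proof into the paper's: compare against $\pi^h$ rather than $\overline{\pi}$ in the $\xi$-values (since $V^{\pi^*}_\xi \geq V^{\pi^h}_\xi$ holds just as immediately, and the On-Mentor-Policy lemma already gives $V^{\pi^h}_\xi \approx V^{\pi^h}_\mu$), and defer the $\pi^M$-versus-$\pi^*$ comparison to the end, performing it under $\mu$ where realized convergence of $\beta$ suffices.
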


$\gamma_t = \gamma^t$ for $\gamma \in (0, 1)$ is an example of a bounded effective horizon. Mentor-level reward acquisition with unlimited exploration is trivial: always defer to the mentor. However, a) this precludes the possibility of exceeding the mentor's performance, and b) the mentor's time is presumably a valuable resource. Our key contribution with Mentee is constructing a criterion for when to ask for help which requires diminishing oversight in general environments. Thus, we construct an example of a performance result that is accessible to an agent that does safe exploration. There is no guarantee of the agent's safety on the whole, but at least its exploration is safe. It is possible that poor generalization will cause it to go to a destruction state during an exploitation step. Our contribution is simply an existence proof that a certain pair of results are attainable even in general environments: a) mentor-level performance with b) diminishing rate of deferral. Furthermore, unlike imitation learners, Mentee might exceed the performance of the mentor, as it does in the experiments below.

Roughly, Theorem \ref{thm:limexp} follows because if the exploration probability exceeded a positive constant infinitely often, that would mean the expected information gain of exploring would be uniformly positive in those instances, by the construction of the exploration probability, and then the agent would gain infinite information over its lifetime. But Mentee starts with a finite entropy prior, so there is only finite information to gain. Then, Theorem \ref{thm:mentorlevel} holds because Mentee's information gain following the mentor's policy approaches 0, so its beliefs about the value of the mentor's policy approach the truth; if Mentee consistently accrued lesser rewards than this, it would realize that its current approach was suboptimal and then change its behavior.

It's not clear what other formal accolades an agent might attain between asymptotic optimality and benchmark-matching (here the mentor is the benchmark). The main part of the paper argues the former is undesirable, and this section constructs an agent which does the latter. It would be an interesting line of research to identify a formal result stronger than benchmark-matching (and an agent which meets it) which does not doom the agent to destruction or incapacitation. But none have been identified so far, so no existing agents have stronger formal guarantees than Mentee (that apply to general computable environments), except for agents that face the negative results presented in Section \ref{sec:results}.

\subsection{Proofs of Mentee Results} \label{subsec:menteeproofs}
Some additional notation is required for this proof. Recall
\begin{equation*}
    \beta(h_{<t}) := \sum_{m \in \mathbb{N}} \sum_{k = 0}^{m-1} \frac{1}{m^2(m+1)} \min \left\{1, \frac{\eta}{m} V^{\IG}_{m, k}(h_{<t})\right\}
\end{equation*}

We let $\rho(h_{<t}, m, k)$ denote a given summand in the sum above. Recall that $\p^\pi_\nu$ denotes the probability when actions are sampled from policy $\pi$ and observations and rewards are sampled from environment $\nu$. We additionally let ${}^{\pi}\!\p^{\pi'}_\nu$ denote the probability when observations and rewards are sampled from environment $\nu$, actions are sampled from $\pi$ when exploiting ($e_t = 0$), and actions are sampled from $\pi'$ when exploring ($e_t = 1$). We do not bother to notate how the exploration indicator is sampled, since for all probability measures that appear in the proof, it is sampled from the true distribution: $\mathrm{Bernoulli}(\beta(h_{<t}))$. Recall that $\pi^*$ is the policy that Mentee follows while exploiting; recall $\pi^h$ is the mentor's policy ($h$ is for human). Thus, $\p^{\pi^M}_\mu$ can also be written ${}^{\pi^*}\!\!\p^{\pi^h}_\mu$. Recall that $1^m$ indicates a string of $m$ 1's.

\begin{lemma} \label{lemexppart0mentee}
$$\E^{\pi^M}_\mu \sum_{t \in m \mathbb{N} + i} \rho(h_{<t}, m, 0)^{m+1} < \infty$$
\end{lemma}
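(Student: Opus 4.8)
\begin{proofidea}
The plan is to peel the exponent $m+1$ into a factor of $m$, which the exploration schedule lets me identify with (a lower bound on) the probability of actually deferring to the mentor for $m$ consecutive steps, and a single leftover factor, which controls the one-window expected information gain; after a change of measure the whole sum collapses to the total information the agent can ever gain, which is finite because the prior has finite entropy. Two pointwise facts about the summand drive everything. First, $\min\{1,x\}\le x$ gives $\rho(h_{<t},m,0)\le \tfrac{\eta}{m^3(m+1)}V^{\IG}_{m,0}(h_{<t})$. Second---the design feature that makes the $k$-index worthwhile---since $V^{\IG}_{m,k}(h_{<t+k})=V^{\IG}_{m,0}(h_{<t})$ by definition, the summand $\rho(h_{<t+j},m,j)$ occurring inside $\beta(h_{<t+j})$ equals $\rho(h_{<t},m,0)$ for every history extending $h_{<t}$ and every $j\le m-1$. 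Hence $\beta(h_{<t+j})\ge \rho(h_{<t},m,0)$ pointwise, so under any measure whose exploration indicators are drawn from $\mathrm{Bern}(\beta(\cdot))$ the chain rule gives $\p(e_{t:t+m-1}=1^m\mid h_{<t})\ge \rho(h_{<t},m,0)^{m}$. Writing $\rho^{m+1}=\rho^{m}\cdot\rho$ and combining the two facts, each summand is at most $\tfrac{\eta}{m^3(m+1)}\,V^{\IG}_{m,0}(h_{<t})\,\p(e_{t:t+m-1}=1^m\mid h_{<t})$.

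Because $V^{\IG}_{m,0}$ is defined through the mixtures $\overline{\pi},\xi$ while $\p^{\pi^M}_\mu={}^{\pi^*}\!\p^{\pi^h}_\mu$ uses the truth, I would first change measure to the mixture-generated reference $\mathbb{Q}:={}^{\pi^*}\!\p^{\overline{\pi}}_{\xi}$. Each $\rho(h_{<t},m,0)^{m+1}$ is a fixed nonnegative $\sigma(h_{<t})$-measurable function, so it suffices to bound its $\mathbb{Q}$-expectation: the exploration indicators are sampled identically under both measures and the exploiting actions ($\pi^*$) cancel, so the Radon--Nikodym derivative of $\p^{\pi^M}_\mu$ against $\mathbb{Q}$ on $\sigma(h_{<t})$ telescopes to $\tfrac{w(\mu\mid h_{<t})}{w(\mu)}\tfrac{w(\pi^h\mid h_{<t})}{w(\pi^h)}\le \tfrac{1}{w(\mu)\,w(\pi^h)}<\infty$, using that $\mu$ and $\pi^h$ have positive prior weight. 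By Tonelli, $\E^{\pi^M}_\mu\sum_{t\in m\mathbb{N}+i}\rho^{m+1}\le \tfrac{1}{w(\mu)w(\pi^h)}\,\E_{\mathbb{Q}}\sum_{t\in m\mathbb{N}+i}\rho^{m+1}$.

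Now I evaluate under $\mathbb{Q}$, where the per-summand bound uses $\mathbb{Q}$'s own exploration probability. Conditional on $h_{<t}$ and $e_{t:t+m-1}=1^m$, the window is generated exactly by $\overline{\pi}$ and $\xi$, so $V^{\IG}_{m,0}(h_{<t})\,\p_{\mathbb{Q}}(e_{t:t+m-1}=1^m\mid h_{<t})=\E_{\mathbb{Q}}\big[\mathbf{1}[e_{t:t+m-1}=1^m]\,\IG(h_{t:t+m-1}\mid h_{<t})\mid h_{<t}\big]$, the expected \emph{realized} information gain in that window. Thus the sum is at most $\tfrac{\eta}{m^3(m+1)}\,\E_{\mathbb{Q}}\sum_{t}\mathbf{1}[e_{t:t+m-1}=1^m]\,\IG(h_{t:t+m-1}\mid h_{<t})$. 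Since $t\in m\mathbb{N}+i$ indexes consecutive, non-overlapping length-$m$ blocks tiling $[i,\infty)$, dropping the indicator ($\IG\ge 0$) and invoking the information-conservation identity---the expected information gained about $(\nu,\pi)$ over disjoint blocks telescopes to at most the prior entropy $H(w)$, exactly the Inq-style bound recalled in Appendix~\ref{sec:inqproof}---bounds this by $\tfrac{\eta\,H(w)}{m^3(m+1)}$, which is finite because the priors over $\mathcal{M}$ and $\mathcal{P}$ are constructed to have finite entropy. Hence the claimed expectation is at most $\tfrac{\eta\,H(w)}{m^3(m+1)\,w(\mu)w(\pi^h)}<\infty$.

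The delicate step is the interplay of the change of measure with the information-conservation bound: the identity ``total expected information gain $\le$ prior entropy'' is valid only when the data are generated by the mixture predictive, which is precisely why I pass from the true $\mu,\pi^h$ to the reference $\mathbb{Q}$ before invoking it, and the whole argument hinges on the uniform bound $1/(w(\mu)w(\pi^h))$ on the derivative. The remaining points to check carefully are that the exploiting actions do not corrupt the telescoping (they are $\theta$-independent given the history and never update the policy posterior) and that the $\mathrm{Bern}(\beta)$ indicators are common to both measures, so that the lower bound $\rho^m$ and the conditional-expectation identity refer to the same exploration law.
\end{proofidea}
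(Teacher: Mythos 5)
Your proposal is correct and takes essentially the same route as the paper's proof: the same decomposition $\rho^{m+1}=\rho^m\cdot\rho$ with $\rho^m$ lower-bounding $\p(e_{t:t+m-1}=1^m\mid h_{<t})$ via the shifted summands $\rho(h_{<t+j},m,j)=\rho(h_{<t},m,0)$, the same change of measure to ${}^{\pi^*}\!\p^{\overline{\pi}}_\xi$ at cost $1/(w(\mu)w(\pi^h))$ (the paper phrases this as the mixture dominating one of its terms, steps (a)--(c), rather than as a Radon--Nikodym bound), and the same entropy-telescoping bound from Appendix \ref{sec:inqproof}. The remaining differences are purely presentational, e.g.\ you retain the indicator $\mathbf{1}[e_{t:t+m-1}=1^m]$ and drop it at the end by nonnegativity of $\IG$, where the paper absorbs the exploration probability directly at its step (f).
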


The intuition for the fact that $\rho(h_{<t}, m, 0) \to 0$ is that if it exceeded $\varepsilon > 0$ infinitely often, then $\frac{\eta}{m} V^{\IG}_{m, k}$ would exceed $\varepsilon$ infinitely often. If this hypothetical information gain from following $\pi^h$ for $m$ steps is at least $m\varepsilon/\eta$, then because the exploration probability depends on this quantity, we actually follow $\pi^h$ for all of those $m$ steps with probability at least $\varepsilon^m$. This means that the actual information gain is, in expectation, bounded below by a positive constant too. However, an agent cannot gain infinite information if it starts with finite entropy, so $\rho$ cannot exceed $\varepsilon$ infinitely often.

\begin{proof}
The proof is quite similar to the proof of \citepos{cohen2019strong} Lemma 6.
\begin{align*}\label{eqn:limexp}
&w(\mu, \pi^h) \E^{\pi^M}_\mu \sum_{t \in m \mathbb{N} + i} \rho(h_{<t}, m, 0)^{m+1} \\
&\equal^{(a)} w(\mu, \pi^h) {}^{\pi^*}\!\!\E^{\overline{\pi}}_\mu \sum_{t \in m \mathbb{N} + i} \rho(h_{<t}, m, 0)^{m+1}
    \\
    &\lequal^{(b)} \sum_{(\nu, \pi) \in \mathcal{M} \times \mathcal{P}} w(\nu, \pi) {}^{\pi^*}\!\!\E^{\pi}_\nu \sum_{t \in m \mathbb{N} + i} \rho(h_{<t}, m, 0)^{m+1}
    \\
    &\equal^{(c)} {}^{\pi^*}\!\!\E^{\overline{\pi}}_\xi \sum_{t \in m \mathbb{N} + i} \rho(h_{<t}, m, 0)^{m+1}
    \\
    &\lequal^{(d)} \sum_{t \in m \mathbb{N} + i} {}^{\pi^*}\!\!\E^{\overline{\pi}}_\xi \rho(h_{<t}, m, 0)^m \frac{\eta}{m^3(m+1)} V^{\IG}_{m, 0}(h_{<t})
    \\
    &\equal^{(e)} \frac{\eta}{m^3(m+1)} \sum_{t \in m \mathbb{N} + i} \E_{h_{<t} \sim {}^{\pi^*}\!\!\p^{\overline{\pi}}_\xi} [ \rho(h_{<t}, m, 0)^m \\
    &\hspace{10mm}\E_{h_{t:t+m-1} \sim \p^{\overline{\pi}}_\xi;\  e_{t:t+m-1} = 1^m} \left[\IG(h_{t:t+m-1}|h_{<t}) \right] ]
    \\
    &\lequal^{(f)} \frac{\eta}{m^3(m+1)} \sum_{t \in m \mathbb{N} + i} \E_{h_{<t} \sim {}^{\pi^*}\!\!\p^{\overline{\pi}}_\xi} \\
    &\hspace{10mm}\left[ \E_{h_{t:t+m-1} \sim {}^{\pi^*}\!\!\p^{\overline{\pi}}_\xi} \left[\IG(h_{t:t+m-1}|h_{<t}) \right]\right]
    \\
    &\equal^{(g)} \frac{\eta}{m^3(m+1)} \sum_{t \in m \mathbb{N} + i} {}^{\pi^*}\!\!\E^{\overline{\pi}}_\xi \IG(h_{t:t+m-1}|h_{<t})
    \\
    &\lequal^{(h)} \frac{\eta}{m^3(m+1)} \mathrm{Ent}(w) \lthan^{(i)} \infty
    \tagaligneq
\end{align*}
(a) follows from the definition of $\pi^M$. (b) follows because the l.h.s. is one term in the (non-negative) sum on the r.h.s. (c) follows from the definitions of the Bayesian mixtures $\xi$ and $\overline{\pi}$. (d) follows from the definition of $\rho(h_{<t}, m, k)$. (e) follows from the definition of the information gain value for Mentee. (f) follows from $\rho(h_{<t}, m, 0)^m$ being a lower bound on the probability that $e_{t:t+m-1} = 1^m$, and because the exploiting policy $\pi^*$ in the probability measure on the r.h.s. is irrelevant when $e_{t:t+m-1} = 1^m$, because $h_{t:t+m-1}$ is exploratory. (g) combines the expectations. The derivation of (h) is virtually identical to \citepos{cohen2019strong} Inequality 20 steps (h)-(t) in the proof of their Lemma 6, reproduced in Appendix \ref{sec:inqproof} with the relevant edits. (i) follows from the fact that $w(\pi, \nu) = w(\pi)w(\nu)$, so the entropy of $w$ is the sum of the entropy of the distribution over policies and the entropy of the distribution over environments, this being a well-known property of the entropy; both are finite by design.

Finally,
\begin{multline} \label{eqn:removemod}
\E^{\pi^M}_\mu \sum_{t = 0}^\infty \rho(h_{<t}, m, 0)^{m+1}
= \sum_{i = 0}^{m-1} \E^{\pi^M}_\mu \hspace{-2mm} \sum_{t \in m\mathbb{N} + i} \hspace{-2mm} \rho(h_{<t}, m, 0)^{m+1}
\\
\lequal^{(\ref{eqn:limexp})} \sum_{i = 0}^{m-1} \frac{\eta \mathrm{Ent}(w)}{m^3(m+1) w(\mu)} = \frac{\eta \mathrm{Ent}(w)}{m^2(m+1)w(\mu)} < \infty
\end{multline}
so the same holds for the sum over all $t$, not just $t \in m \mathbb{N} + i$.
\end{proof}

\limitedexploration*

\begin{proof}
The proof is identical to that of \citet{cohen2019strong} Lemma 7, but with our Lemma \ref{lemexppart0mentee} taking the place of \citet{cohen2019strong} Lemma 6.
\end{proof}

Now, we show that Mentee accurately predicts the distribution of the observations and rewards that come from deferring to the mentor.

\begin{lemma}[On-Mentor-Policy Convergence]
For all $h_{t:t+m-1} \in \mathcal{H}^*$,
$$\p^{\pi^h}_\mu(h_{t:t+m-1} | h_{<t}) - \p^{\pi^h}_\xi(h_{t:t+m-1} | h_{<t}) \to 0 \ \ \textrm{w.p.1}$$
\end{lemma}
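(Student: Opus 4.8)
The plan is to extract the statement from the fact---already in hand---that the expected information gain of following the mentor vanishes, and then to isolate the environment-prediction component of that information gain. First I would note that Lemma \ref{lemexppart0mentee} gives $\E^{\pi^M}_\mu \sum_t \rho(h_{<t},m,0)^{m+1} < \infty$, so the summand tends to $0$ with $\p^{\pi^M}_\mu$-probability $1$; since $\rho(h_{<t},m,0) = \frac{1}{m^2(m+1)}\min\{1, \frac{\eta}{m}V^{\IG}_{m,0}(h_{<t})\}$, this forces $V^{\IG}_{m,0}(h_{<t}) \to 0$ a.s. for each fixed $m$. This is the only input from the preceding development that I need.

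Next I would invoke the standard identity that the expected KL from the posterior back to the prior is a mutual information, which by symmetry rewrites as an average over the joint posterior of the KL from each per-parameter $m$-step predictive to the Bayes-mixture predictive. Using that the sequential mixture rollout $\p^{\overline{\pi}}_\xi(\cdot|h_{<t})$ coincides with the one-shot marginal predictive $\sum_{\nu,\pi} w(\nu,\pi|h_{<t})\,\p^{\pi}_\nu(\cdot|h_{<t})$ (the chain rule for Bayes mixtures, reproduced in Appendix \ref{sec:inqproof}), this reads $V^{\IG}_{m,0}(h_{<t}) = \sum_{\nu,\pi} w(\nu,\pi|h_{<t})\,\KL\big(\p^{\pi}_\nu(\cdot|h_{<t})\,\|\,\p^{\overline{\pi}}_\xi(\cdot|h_{<t})\big)$. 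Dropping every term but the one at the true pair $(\mu,\pi^h)$ gives $V^{\IG}_{m,0}(h_{<t}) \geq w(\mu,\pi^h|h_{<t})\,\KL\big(\p^{\pi^h}_\mu(\cdot|h_{<t})\,\|\,\p^{\overline{\pi}}_\xi(\cdot|h_{<t})\big)$. Because the action factors and the observation/reward factors of $\p^{\overline{\pi}}_\xi$ separate, the divergence on the right splits into a non-negative mentor-policy term $\sum_k \E^{\pi^h}_\mu \KL(\pi^h\|\overline{\pi})$ plus a non-negative environment term $\sum_k \E^{\pi^h}_\mu \KL\big(\mu(\cdot|h_{<k}a_k)\,\|\,\xi(\cdot|h_{<k}a_k)\big)$; the latter is exactly $\KL\big(\p^{\pi^h}_\mu(\cdot|h_{<t})\,\|\,\p^{\pi^h}_\xi(\cdot|h_{<t})\big)$, since the identical $\pi^h$ action factors cancel in that divergence. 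Discarding the policy term therefore yields $\KL\big(\p^{\pi^h}_\mu\,\|\,\p^{\pi^h}_\xi\big) \leq V^{\IG}_{m,0}(h_{<t})/w(\mu,\pi^h|h_{<t})$.

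To close the argument I need $w(\mu,\pi^h|h_{<t})$ bounded away from $0$ a.s. Writing $w(\mu|h_{<t}) = w(\mu)/R_t$ with $R_t = \prod_{k<t} \xi(o_k r_k|h_{<k}a_k)/\mu(o_k r_k|h_{<k}a_k)$ a non-negative $\p^{\pi^M}_\mu$-martingale, and $w(\pi^h|h_{<t}) = w(\pi^h)/R'_t$ with $R'_t$ the analogous martingale over the exploratory ($e_k=1$) steps, martingale convergence gives finite limits $R_\infty, R'_\infty$, so both posteriors converge to strictly positive limits with probability $1$. Combined with $V^{\IG}_{m,0}(h_{<t}) \to 0$ this gives $\KL\big(\p^{\pi^h}_\mu\,\|\,\p^{\pi^h}_\xi\big) \to 0$ a.s., and Pinsker's inequality upgrades this to total-variation and hence pointwise convergence $\p^{\pi^h}_\mu(h_{t:t+m-1}|h_{<t}) - \p^{\pi^h}_\xi(h_{t:t+m-1}|h_{<t}) \to 0$ for every fixed fragment.

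I expect the delicate step to be the mutual-information identity together with the coincidence of the sequential Bayes-mixture rollout and the one-shot joint marginal predictive, so that the single retained term really equals $w(\mu,\pi^h|h_{<t})\,\KL(\p^{\pi^h}_\mu\|\p^{\overline{\pi}}_\xi)$: the posterior over $(\nu,\pi)$ updates within the rollout, and one must verify that the factorization of the likelihood---observations depend only on $\nu$, mentor actions only on $\pi$---keeps the posterior a product throughout. The positivity of the limiting posterior weight is standard but also warrants care for the policy component, whose likelihood ratio is a martingale only along exploratory timesteps and whose updating may freeze once $\beta(h_{<t}) \to 0$; the martingale-convergence argument nonetheless pins its limit strictly above $0$.
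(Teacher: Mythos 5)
Your proposal is correct and follows essentially the same route as the paper's proof, which chains the entropy (Pinsker-type) inequality, the mixture-dominance/mutual-information identity of \citet{cohen2019strong} (Inequality 24, reproduced in Appendix \ref{sec:inqproof}), the summability of $\rho(h_{<t}, m, 0)^{m+1}$ from Lemma \ref{lemexppart0mentee}, and the a.s.\ positive lower bound on $w(\mu, \pi^h | h_{<k})$ (\citet{cohen2019strong} Lemma 5); you merely reorder that chain (extracting $V^{\IG}_{m,0} \to 0$ directly rather than arguing by contradiction) and re-derive the posterior bound via likelihood-ratio martingale convergence instead of citing it. One small nit: your likelihood-ratio processes are in general only non-negative \emph{super}martingales under $\p^{\pi^M}_\mu$ (e.g.\ when $\overline{\pi}$ or $\xi$ place mass outside the support of $\pi^h$ or $\mu$), but the convergence theorem applies to supermartingales as well, so the argument is unaffected.
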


Very roughly, if there is no information to be gained by following the mentor's policy for $m$ steps (which follows from the exploration probability going to 0), there is no predictive error either.

\begin{proof}
The proof closely follows that of \citet{cohen2019strong} Lemma 8. Suppose that $0 < \varepsilon \leq (\p^{\pi^h}_\mu(h_{t:t+m-1}|h_{<t}) - \p^{\pi^h}_\xi(h_{t:t+m-1}|h_{<t}))^2$ for some $h_{t:t+m-1}$. Then,
\begin{align*}
    \varepsilon &\leq (\p^{\pi^h}_\mu(h_{t:t+m-1}|h_{<t}) - \p^{\pi^h}_\xi(h_{t:t+m-1}|h_{<t}))^2
    \\
    &\lequal \frac{1}{\inf_k w(\mu, \pi^h | h_{<k})} V^{\IG}_{m, 0}(h_{<t})
    \tagaligneq
\end{align*}
following the same derivation as in \citet{cohen2019strong} Inequality 24, reproduced with relevant edits in Appendix \ref{sec:inqproof}.

Therefore,
\begin{equation}
    (\p^{\pi^h}_\mu(h_{t:t+m-1}|h_{<t}) - \p^{\pi^h}_\xi(h_{t:t+m-1}|h_{<t}))^2 \geq \varepsilon \ \ \mathrm{i.o.}
\end{equation}
implies
\begin{equation}
    V^{\IG}_{m, 0}(h_{<t}) \geq \varepsilon \inf_k w(\mu, \pi^h | h_{<k}) \ \ \mathrm{i.o.}
\end{equation}
which implies
\begin{equation}
    \rho(h_{<t}, m, 0) \geq \min\{\frac{1}{m^2(m+1)}, \varepsilon \inf_k w(\mu, \pi^h | h_{<k})\} \ \ \mathrm{i.o.}
\end{equation}
which implies
\begin{equation}
    \sum_{t = 0}^\infty \rho(h_{<t}, m, 0)^{m+1} = \infty \ \ \textrm{or} \ \ \inf_k w(\mu, \pi^h | h_{<k}) = 0 
\end{equation}

This has probability 0 by Lemma \ref{lemexppart0mentee} and \citet{cohen2019strong} Lemma 5. Thus, with probability 1, $\p^{\pi^h}_\mu(h_{t:t+m-1}|h_{<t}) - \p^{\pi^h}_\xi(h_{t:t+m-1}|h_{<t}) \to 0$.
\end{proof}

The same holds regarding Mentee's predictions about the effects of its own actions.

\begin{lemma}[On-Policy Convergence]
For all $h_{t:t+m-1} \in \mathcal{H}^*$,
$$\p^{\pi^*}_\mu(h_{t:t+m-1} | h_{<t}) - \p^{\pi^*}_\xi(h_{t:t+m-1} | h_{<t}) \to 0 \ \ \textrm{w.p.1}$$
\end{lemma}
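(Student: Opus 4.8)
The plan is to reduce the claim to the accuracy of the environment posterior along the continuations that $\pi^*$ actually produces, and then to obtain that accuracy from the on-policy data Mentee collects while exploiting. First, since $\pi^*$ is deterministic, for any continuation $h_{t:t+m-1}$ whose actions disagree with $\pi^*$ we have $\p^{\pi^*}_\mu(h_{t:t+m-1}\mid h_{<t}) = \p^{\pi^*}_\xi(h_{t:t+m-1}\mid h_{<t}) = 0$, so only on-$\pi^*$-path continuations matter; for those the shared policy factors cancel, leaving a difference of environment $m$-step products. It is cleaner to prove the stronger statement that the predictive distributions merge in total variation, $\sum_{h_{t:t+m-1}}\big|\p^{\pi^*}_\mu(h_{t:t+m-1}\mid h_{<t}) - \p^{\pi^*}_\xi(h_{t:t+m-1}\mid h_{<t})\big| \to 0$ w.$\p^{\pi^M}_\mu$-prob. 1, which implies the per-continuation claim for every $h_{t:t+m-1}$ simultaneously. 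A telescoping/coupling inequality bounds this $m$-step total variation by the expected sum of one-step environment total variations under a $\pi^*$-rollout from $h_{<t}$, and Pinsker's inequality replaces each by $\sqrt{\tfrac12\KL\big(\mu(\cdot\mid h_{<t+j}a_{t+j})\,\|\,\xi(\cdot\mid h_{<t+j}a_{t+j})\big)}$.

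This is the point where the argument must depart from Inq and from On-Mentor-Policy Convergence. There the controlling quantity is the expected information gain $V^{\IG}_{m,0}$, which the exploration schedule forces to $0$; but $\pi^*$ is the policy Mentee \emph{exploits} with, not the one it explores with, so no such control is available. Instead I would use that $\pi^*$ is the policy Mentee actually executes whenever it is not deferring, which by Theorem \ref{thm:limexp} is almost always. The standard Bayesian bound (the environment analogue of the entropy telescoping reproduced in Appendix \ref{sec:inqproof}, using the dominance $\xi \geq w(\mu)\mu$) gives
\[
\E^{\pi^M}_\mu \sum_{k=1}^\infty \KL\big(\mu(\cdot\mid h_{<k}a_k)\,\|\,\xi(\cdot\mid h_{<k}a_k)\big) \leq -\log w(\mu) < \infty,
\]
so the realized per-step environment divergence is summable and its tail vanishes $\p^{\pi^M}_\mu$-almost surely. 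It remains to transfer this from the realized trajectory to the counterfactual $\pi^*$-rollout appearing in the coupling bound: since $\pi^M \geq (1-\beta)\pi^*$ stepwise, the $\pi^*$-rollout measure over the next $m$ steps is dominated by $(1 - \sup_{k\geq t}\beta(h_{<k}))^{-m}$ times $\p^{\pi^M}_\mu$, and $\beta(h_{<k}) \to 0$ a.s.\ drives this factor to $1$. A L\'evy/martingale-convergence argument then shows the conditional expectation of the summable tail vanishes a.s., so the coupling bound tends to $0$ and the total variation merges.

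The main obstacle is exactly this cross-policy transfer: the statement concerns a $\pi^*$-rollout whereas the data are generated under $\pi^M$, and a fixed continuation $h_{t:t+m-1}$ may lie off the realized path in its observations and rewards, so the realized-trajectory divergence bound does not literally evaluate at the histories appearing in the rollout. Passing to total-variation merging is what resolves this cleanly, because it averages over $\mu$-typical $\pi^*$-continuations---precisely the histories Mentee visits while exploiting---rather than pinning down one off-path continuation at a time; the determinism of $\pi^*$ together with $\beta \to 0$ ensure these are the only continuations carrying any probability. The remaining ingredients (Pinsker, the coupling inequality, and the uniform integrability behind the martingale tail argument) are routine, so the whole lemma reduces to the dominance bound above together with $\beta \to 0$ from Theorem \ref{thm:limexp}.
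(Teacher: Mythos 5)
Your proposal is correct in substance but takes a genuinely different route from the paper. The paper's proof is a two-line reduction: it treats the whole interaction process under $\pi^M$ as a sequence-prediction problem for an ``outside observer'' with model class $\{\p^{\pi^M}_\nu : \nu \in \mathcal{M}\}$ and prior $w'(\p^{\pi^M}_\nu) = w(\nu)$, invokes \citet{blackwell1962merging} merging of opinions (absolute continuity holding because $\p^{\pi^M}_\xi \geq w(\mu)\p^{\pi^M}_\mu$) to get total-variation merging under $\pi^M$, and then uses $\beta(h_{<t}) \to 0$ from Theorem \ref{thm:limexp} to pass from $\pi^M$ to $\pi^*$. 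You instead rebuild the merging by hand: telescoping the $m$-step total variation into one-step environment divergences, Pinsker, the cumulative-KL bound $\E^{\pi^M}_\mu \sum_k \KL(\mu(\cdot|h_{<k}a_k)\,\|\,\xi(\cdot|h_{<k}a_k)) \leq -\log w(\mu)$, and L\'evy's upward theorem for the conditional tail, with the change of measure $\pi^M \geq (1-\beta)\pi^*$ handling the cross-policy transfer. What your route buys is an explicit, quantitative argument that makes visible exactly where finiteness of information enters and, notably, makes explicit the cross-policy step that the paper compresses into the single sentence ``$\pi^M \to \pi^*$ w.p.1, which gives us our result''---a step that in both proofs requires controlling $\beta$ along \emph{counterfactual} $\pi^*$-continuations, not just the realized path. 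What the paper's route buys is brevity and reuse of a classical theorem, and it sidesteps your summability worry entirely (note your phrase ``the realized per-step environment divergence is summable'' is only safe for the KL, not for its square root; keep the Pinsker step inside the finite $m$-step conditional sum, as your Jensen step implicitly does). One patch you should make explicit: your dominance factor $(1 - \sup_{k \geq t}\beta(h_{<k}))^{-m}$ quantifies over rollout continuations, where $\beta \to 0$ a.s.\ under $\p^{\pi^M}_\mu$ gives no direct control; but since $V^{\IG}_{m,k}(h_{<t}) \leq \mathrm{Ent}(w) < \infty$ uniformly (as the paper notes), one gets a history-uniform bound $\beta(h) \leq c < 1$, and a constant factor $(1-c)^{-m}$ suffices for your argument, since the controlling conditional tail tends to $0$ anyway---you do not actually need the factor to tend to $1$.
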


On-policy prediction can be reduced to sequence prediction, for which the bounded errors of Bayesian predictors are well-known.

\begin{proof}
First, we replace $\pi^*$ with $\pi^M$ in the equation above and prove that. It is well-known that on-policy Bayesian predictions approach the truth with probability 1, in the sense above (in fact, in a much stronger sense), but we show here how this follows from an even more well-known result.

Consider an outside observer predicting the entire interaction history with the following model-class and prior: $\mathcal{M}' = \left\{\p^{\pi^M}_\nu \ \va \ \nu \in \mathcal{M}\right\}$, $w'\left(\p^{\pi^M}_\nu\right) = w(\nu)$. By definition, $w'\left(\p^{\pi^M}_\nu \va h_{<t}\right) = w(\nu | h_{<t})$, so at any episode, the outside observer's Bayes-mixture model is just $\p^{\pi^M}_\xi$. By \citet{blackwell1962merging}, this outside observer's predictions approach the truth in total variation, which implies
\begin{equation}
    \p^{\pi^M}_\mu(h_{t:t+m-1} | h_{<t}) - \p^{\pi^M}_\xi(h_{t:t+m-1} | h_{<t}) \to 0 \ \ \textrm{w.p.1}
\end{equation}

We have shown $\beta(h_{<t}) \to 0$ w.p.1, so $\pi^M \to \pi^*$ w.p.1, which gives us our result:
$$\p^{\pi^*}_\mu(h_{t:t+m-1} | h_{<t}) - \p^{\pi^*}_\xi(h_{t:t+m-1} | h_{<t}) \to 0 \ \ \textrm{w.p.1}$$
\end{proof}

It is very intuitive that if Mentee's on-policy predictions and on-mentor-policy predictions approach the truth, it will eventually accumulate reward at least well as the mentor. Indeed:

\mentorlevel*

\begin{proof}
As is spelled out in the proof of \citet{cohen2019strong} Theorem 3, because of the bounded horizon $\forall \varepsilon > 0 \  \exists m \ \forall t \ \Gamma_{t+m} / \Gamma_t < \varepsilon$, the convergence of predictions implies the convergence of the value (which depends linearly on the probability of events). We repeat the derivation in Appendix \ref{sec:inqproof}. Thus, from the On-Mentor-Policy and On-Policy Convergence Lemmas, we get analogous convergence results for the value of those policies:
\begin{equation}
    V^{\pi^h}_\mu(h_{<t}) - V^{\pi^h}_\xi(h_{<t}) \to 0 \ \ \textrm{w.p.1}
\end{equation}
\begin{equation}
    V^{\pi^*}_\mu(h_{<t}) - V^{\pi^*}_\xi(h_{<t}) \to 0 \ \ \textrm{w.p.1}
\end{equation}

Finally, $\pi^*(\cdot | h_{<t}) = \argmax_{\pi \in \Pi} V^\pi_\xi(h_{<t})$, so $V^{\pi^\star}_\xi \geq V^{\pi^h}_\xi$. Supposing by contradiction that $V^{\pi^h}_\mu(h_{<t}) - V^{\pi^*}_\mu(h_{<t}) > \varepsilon$ infinitely often, then either $V^{\pi^*}_\xi(h_{<t}) - V^{\pi^*}_\mu(h_{<t}) > \varepsilon/2$ infinitely often or $V^{\pi^h}_\mu(h_{<t}) - V^{\pi^h}_\xi(h_{<t}) > \varepsilon/2$ infinitely often, both of which have $\p^{\pi^M}_\mu$-probability 0. Therefore, with probability 1, $V^{\pi^h}_\mu(h_{<t}) - V^{\pi^*}_\mu(h_{<t}) > \varepsilon$ only finitely often, for all $\varepsilon > 0$. Since $\pi^M$ approaches $\pi^*$, the same holds for $\pi^M$ as $\pi^*$.
\end{proof}

\section{Empirical Performance of Mentee} \label{sec:experiments}
To test the performance of the agent Mentee we implemented Mentee in the AIXIjs framework \cite{aslanides2017aixijs,aslanides2017universal,lamont2017generalised}. We compared its performance to the asymptotically agents discussed in Appendix \ref{sec:optimalagents}: Inq \citep{cohen2019strong}, BayesExp \citep{lattimore2014bayesian}, and Thompson sampling \citep{Hutter:16thompgrl}. We also compared it to its mentor. We tested the agents in a Gridworld environment containing walls, reward dispensers, and traps. For our experiments we define the mentor as an agent who knew the location of the traps, and chooses to avoids traps, but otherwise acts randomly. An example Gridworld is given in Figure \ref{fig:gridworld}.

\begin{figure}[!h]
    \centering
    \includegraphics[width=0.7\linewidth]{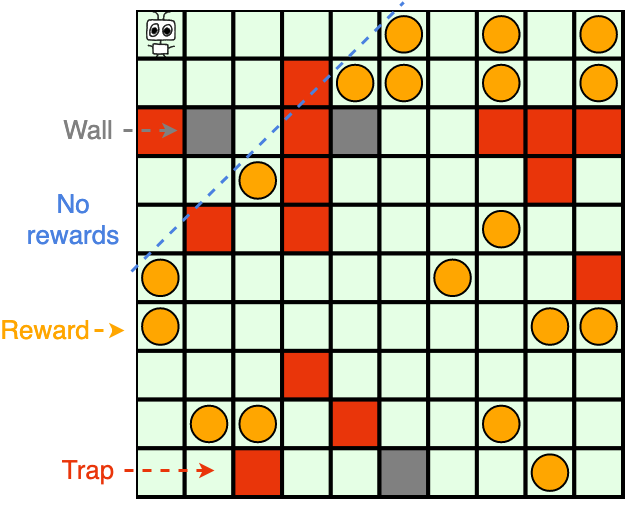}
    \caption{Example 10$\times$10 Gridworld with traps.}
    \label{fig:gridworld}
\end{figure}

We followed the conventions used in \cite{cohen2019strong}, testing the agents on a $10\times 10$ gridworld, with reward dispensers at least 5 moves away from the start, which dispense a reward of 1 with probability 0.75, and then to break ergodicity, we added traps which give -30 reward forever---each grid cell contains one with probability 0.2, equal to the dispenser probability. Following \citet{aslanides2017aixijs}, each agent has a Dirichlet distribution (with $\alpha = \vec{1}$) over the potential contents of each cell: $\{\textit{empty}, \textit{wall}, \textit{dispenser}, \textit{trap}\}$. Note $\alpha = \vec{1}$ means the prior for each possibility for each cell is uniform. For the planning component, we cannot perform a full expectimax planning as this requires computing the expected reward for each action sequence. Expectimax planning is simply evaluating $\argmax_{a_t} \E_{o_t, r_t | h_{<t} a_t} \max_{a_{t+1}} \E_{o_{t+1}, r_{t+1} | h_{<t+1} a_{t+1}} ... \sum_{k = t}^{t+m} \gamma^k r_k$, by constructing an entire an tree of depth $m$. Instead, the agents approximate expectimax planning with $\rho$UCT \citep{Hutter:11aixictwx} (described in Appendix \ref{sec:rhouctalgo}), inheriting  \citepos{cohen2019strong} hyperparameters: we used discount factor $\gamma= 0.99$ and planning horizon of 6, and we doubled their MCTS samples to 1200. For the results presented here we used a single random seed which is provided with the code and averaged over 20 simulations. We tested on several different random seeds and the results were similar. We set the exploration constant $\eta=0.1$ to make Mentee always explore when there was at least 10 bits of information to gain, but we tested $\eta$ from $0.01$ to $10000$. Mentee's model class $\mathcal{P}$ over possible mentor policies was the set of policies which take an action uniformly from a nonempty subset of $\{\textit{up}, \textit{down}, \textit{left}, \textit{right} \}$, for each grid cell. This set of policies has size $15^{100}$, but can be factored over each grid cell, allowing efficient computation. Mentee has a uniform prior over $\mathcal{P}$.

The code can be found at \url{github.com/ejcatt/aixijs_mentee}
with instructions in README.md. The results are presented in Figure \ref{fig:exp}.
\begin{figure}
    \centering
    \includegraphics[width=0.95\linewidth]{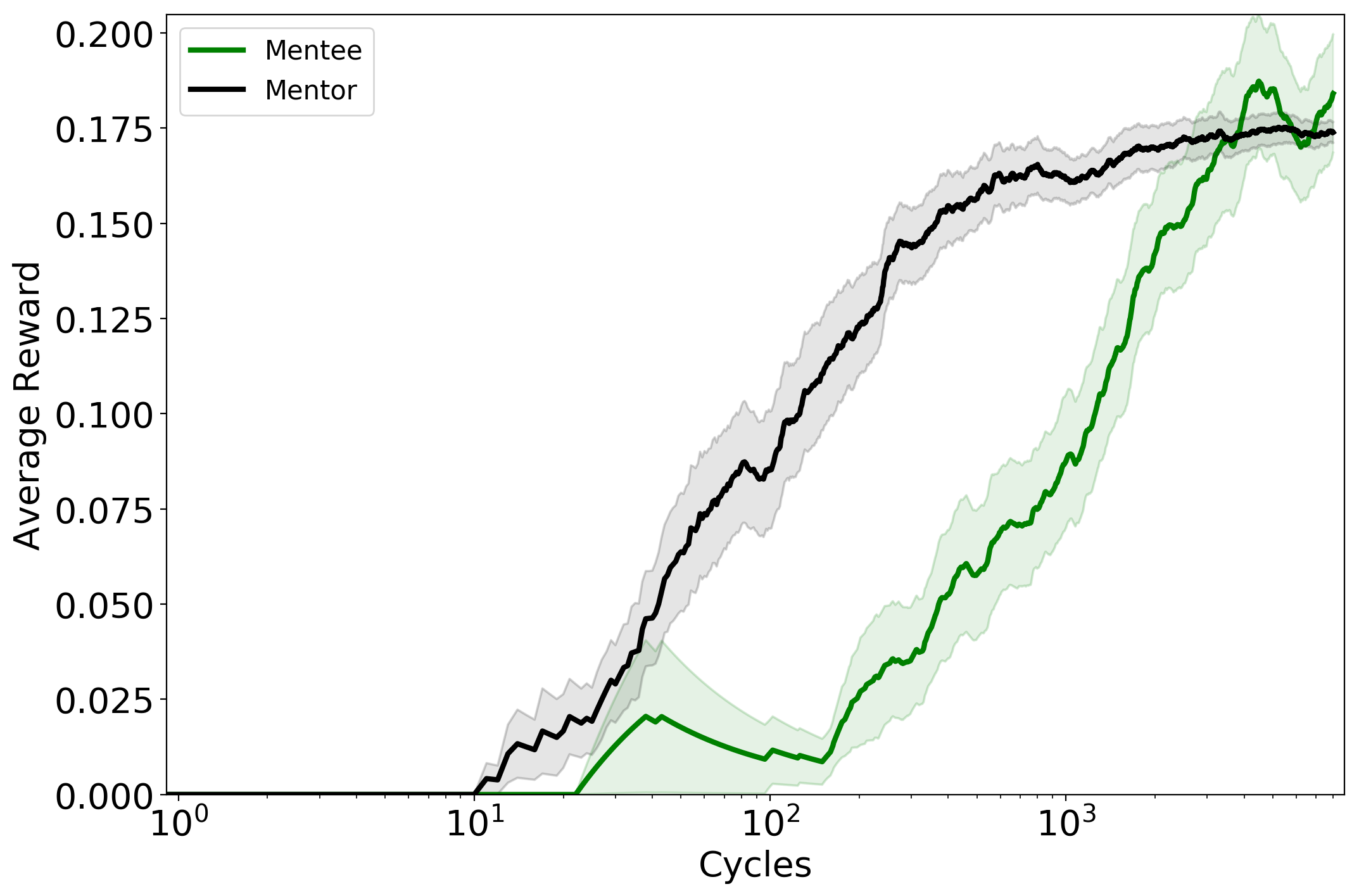}
    \includegraphics[width=0.95\linewidth]{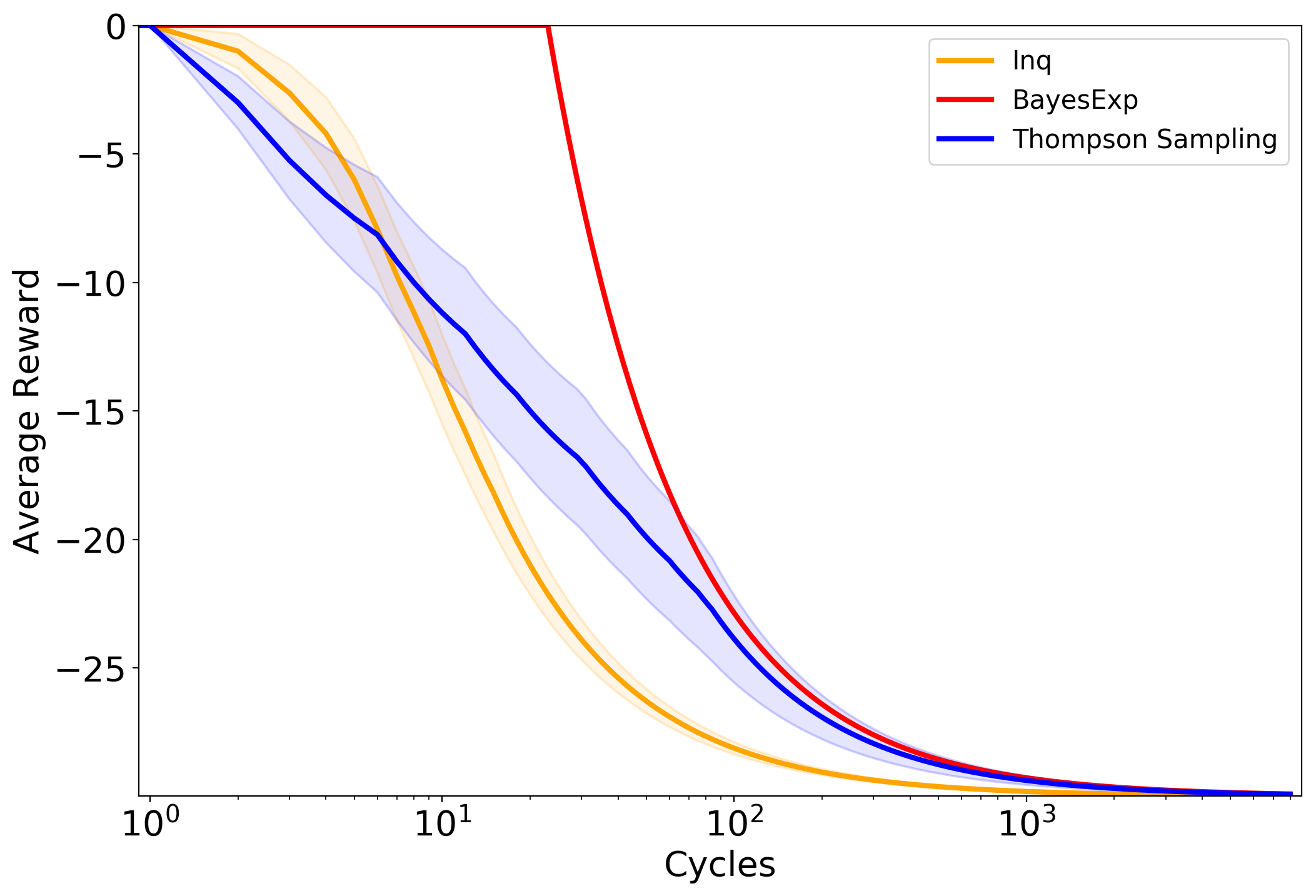}
    \caption{Mean performance in 10$\times$10 Gridworld with traps over 20 runs of agents. Reward is averaged over the whole history up to that timestep.}
    \label{fig:exp}
\end{figure}
Mentee outperformed BayesExp, Inq, Thompson sampling, and its mentor. Mentee avoids traps by deferring exploration to a mentor that avoids them, whereas other agents explore until they fall into the traps. The fraction of steps in which Mentee defers was 0.067 +/- 0.025. Mentee's probability of deferring decays slowly, and the rollout step in $\rho$UCT, useful for memory efficiency, slows Mentee's return to dispensers after the mentor leads it away. 

\section{Conclusion} \label{sec:conclusion}
We have shown that asymptotically optimal agents in sufficiently difficult environments will become either destroyed or incapacitated. This is best understood as accidental and resulting from exploration. We have also constructed and tested empirically an agent with a weaker performance guarantee whose exploration is overseen by another agent. We hope this paper motivates the field of safe exploration and invites more research into what sorts of results are possible for a proposed approach to safe exploration in general environments. We hope to have cast some doubt on the breadth of the relevance of results that are predicated on an ergodicity assumption, despite recognizing of course that the ergodicity assumption has yielded a number of interesting and useful agent designs for certain contexts.

It may also be instructive to consider how humans respond to the difficulty presented here. Human children are parented for years, during which parents attempt to ensure that their children's environment is, with respect to relevant features of the environment, nearly ergodic and safe to explore. Breaking an arm is fine; breaking a neck is not. During this time, a child's beliefs are supposed to become sufficiently accurate such that her estimates of which unknown unknowns are too dangerous to investigate yield no false negatives for the rest of her life. Perhaps our results suggest we are in need of more theory regarding the ``parenting'' of artificial agents.

\bibliography{cohen}

\ifnoappendix
\customlabel{sec:waoresults}{A}
\customlabel{sec:menteealg}{B}
\customlabel{sec:inqproof}{C}
\customlabel{sec:rhouctalgo}{D}
\else

\onecolumn
\appendices
\section{Proof of Weak Asymptotic Optimality Results} \label{sec:waoresults}

From the No Heaven Assumption, we show

\lemweak*
\vspace{-2ex}
\begin{equation*}
    \p^{\pi'}_\mu \Big(\liminf_{t \to \infty} \sum_{k=1}^t [\![h_{<k} \in \mathcal{C}]\!] / t > 0 \implies \va\{t: h_{<t} \in \mathcal{C} \wedge \forall k \leq m \ a_{t+k} = \pi(h_{<t+k})\}\va = \infty \Big) = 1
\end{equation*}

The proof is nearly identical to that of the strong version.

\begin{proof}
Let $\mu$ be the true environment. Let $\pi$ be an arbitrary computable deterministic policy. Let $\pi'$ be the weakly asymptotically optimal agent's policy. Let $\nu^n_m$ be the environment which mimics $\mu$ until $\pi$ has been executed for $m$ consecutive timesteps from context $\mathcal{C}$ a total of $n$ times. After that, all rewards are maximal. Call this event ``the agent going to heaven.'' Let $\mathcal{C}^n_m$ be the set of interaction histories such that according to $\nu^n_m$, executing $\pi$ for one more timestep would send the agent to heaven. Thus, $\mathcal{C}^n_m$ is the set of interaction histories $h_{<t}$ such that there are exactly $n-1$ times in the interaction history where $\pi$ was executed for $m$ consecutive timesteps starting from context $\mathcal{C}$, and for the last $m-1$ timesteps, $\pi$ has been executed, and $h_{<t-(m-1)} \in \mathcal{C}$.

\begin{equation}
    h_{<t} \in \mathcal{C}^n_m \implies V^\pi_{\nu^n_m}(h_{<t}) = 1
\end{equation}
because this is the value of going to heaven.

Suppose by contradiction that for some $n$ and $m$, $h_{<t} \in \mathcal{C}^n_m$ regularly in an infinite interaction history with positive $\p^{\pi'}_\mu$-probability. (Recall $\pi'$ is the true policy). If the agent ever executed $\pi$ from the context $\mathcal{C}^n_m$, then that context would not occur again, because there will never again be exactly $n-1$ times in the interaction history that $\pi$ was executed for $m$ consecutive timesteps following the context $\mathcal{C}$; there will be at least $n$ such times. Thus, if $h_{<t} \in \mathcal{C}^n_m$ regularly, then $\pi'$ never executes $\pi$ in the context $\mathcal{C}^n_m$. Since $\nu^n_m$ mimics $\mu$ until $\pi$ is executed from $\mathcal{C}^n_m$, and since this never occurs (under this supposition), then $\p^{\pi'}_\mu = \p^{\pi'}_{\nu^n_m}$. By the No Heaven Assumption, $\limsup_{t \to \infty} V^{\pi'}_\mu(h_{<t}) < 1$, and therefore, $\limsup_{t \to \infty} V^{\pi'}_{\nu^n_m}(h_{<t}) < 1$.

However, for $h_{<t} \in \mathcal{C}^n_m$, $V^*_{\nu^n_m}(h_{<t}) = V^\pi_{\nu^n_m}(h_{<t}) = 1$, so for some $\varepsilon$, the value difference between $V^*_{\nu^n_m}(h_{<t})$ and $V^{\pi'}_{\nu^n_m}(h_{<t})$ is greater than $\varepsilon$ every time $h_{<t} \in \mathcal{C}^n_m$. We supposed that this occurs regularly with positive $\p^{\pi'}_\mu$-probability, so it also occurs regularly with positive $\p^{\pi'}_{\nu^n_m}$-probability. A regularly occurring difference greater than $\varepsilon$ precludes convergence in Cesáro average. Since $\pi$ is computable, and $\mathcal{C}$ is decidable, $\nu^n_m$ is a computable environment, so this contradicts $\pi'$ being weakly asymptotically optimal. Thus, for all $n$ and $m$, with $\p^{\pi'}_\mu$-probability 1, $h_{<t} \in \mathcal{C}^n_m$ only finitely often.

\begin{equation}
    \forall n, m \in \mathbb{N} \ \ \p^{\pi'}_\mu \left(h_{<t} \in \mathcal{C}^n_m \ \ \textrm{i.o.}\right) = 0
\end{equation}

The rest of the proof is identical to that of the strong version of the Try Everything Lemma.
\end{proof}

\section{Mentee Pseudocode} \label{sec:menteealg}

The following pseudocode is designed to be short and readable, not efficient. Some quantities are re-computed multiple times in different subroutines, in a way that would be easily avoidable in practice.

\begin{algorithm}[H]
\fontsize{12}{8}\selectfont
\caption{Mentee Algorithm}\label{alg:mentee}
\begin{algorithmic}[1]
\Require{history $h_{<t}$; exploration history $e_{<t}$; world-models $(\nu_i)_{i \in \mathbb{N}}$; mentor-models $(\pi_i)_{i \in \mathbb{N}}$; prior $w$; discount $(\gamma_k)_{k \in \mathbb{N}}$; tolerance $\varepsilon$}
\State $m \gets \min_k \{k : \sum_{j = t + k}^\infty \gamma_j / \sum_{j = t}^\infty \gamma_j < \varepsilon\}$ // approximate with finite horizon
\State $\beta \gets \textsc{ExplorationProbability}(h_{<t}, e_{<t}, (\nu_i)_{i \in \mathbb{N}}, (\pi_i)_{i \in \mathbb{N}}, w, \varepsilon, m, \eta)$
\If {$\textsc{UniformRandom}([0, 1)) < \beta$}
    \Return $\emptyset$ // defer to the mentor
\EndIf
    \State $a$, $V \gets \textsc{Expectimax}(h_{<t}, (\nu_i)_{i \in \mathbb{N}}, w, m, \varepsilon)$
    \State \Return $a$
\Statex

\Function{Expectimax}{history $h_{<t}$, models $(\nu_i)_{i \in \mathbb{N}}$, prior $w$, discount $(\gamma_k)_{k \in \mathbb{N}}$, depth $m$, tolerance $\varepsilon$}
    \If {$m = 0$}
    \Return $a_0$, 0
    \EndIf
    \State $n_{\textrm{mod}}, (w(\nu_i | h_{<t}))_{i < n_{\textrm{mod}}} \gets \textsc{PosteriorWithinTolerance}(h_{<t}, \mathcal{M}, w, \varepsilon)$
    \State max $\gets 0$
    \State maximizer $\gets \emptyset$
    \For {$a \in \mathcal{A}$}
        \State value $\gets 0$
        \For {$o, r \in \mathcal{O} \times \mathcal{R}$}
            \State $\_$, next-value $\gets \textsc{Expectimax}(h_{<t}aor, (\nu_i)_{i \in \mathbb{N}}, w, (\gamma_k)_{k \in \mathbb{N}}, m-1)$
            \State value $\gets$ value $+ \left( \gamma_t r + \textrm{next-value} \right) \sum_{i < n_{\textrm{mod}}} w(\nu_i | h_{<t}) \nu_i(o, r | h_{<t} a) $
        \EndFor
        \If {value $>$ max or maximizer $= \emptyset$}
            \State max $\gets$ value
            \State maximizer $\gets a$
        \EndIf
    \EndFor
    \Return $a$, max
\EndFunction

\Statex

\Function{PosteriorWithinTolerance}{history $h_{<t}$; models $(\nu_i)_{i \in \mathbb{N}}$; prior $w$; tolerance $\varepsilon$; timesteps to update $e_{<t}$ (optional); minimum models to consider $n$ (optional)}
	\State prior-left $\gets 1$ // how much of the prior has not been evaluated
	\State normalizing-factor $\gets 0$ // sum of $w(\nu_i) \nu_i(h_{<t})$ for evaluated models
	\State $i \gets 0$
    \While {$\textrm{prior-left} / \textrm{normalizing-factor} > \varepsilon$ and (if $n$ is specified) $i < n$}
         \If{models are policies}
             \State $w(\nu_i | h_{<t}) \gets w(\nu_i) \prod_{k < t : e_k = 1} \nu_i(a_k | h_{<k})$ // un-normalized posterior
         \Else
             \State $w(\nu_i | h_{<t}) \gets w(\nu_i) \prod_{k < t} \nu_i(o_k r_k | h_{<k} a_k)$ // un-normalized posterior
         \EndIf
         \State // the above could be made cheaper if $w(\nu_i | h_{<t-1})$ is cached from the last timestep
        \State $\textrm{prior-left} \gets \textrm{prior-left} - w(\nu_i)$
        \State $\textrm{normalizing-factor} \gets \textrm{normalizing-factor} + w(\nu_i | h_{<t})$
        \State $i \gets i + 1$
    \EndWhile
    \State $n_{\textrm{models}} \gets i$
    \For {$0 \leq j < n_{\textrm{models}}$}
        \State $w(\nu_j | h_{<t}) \gets w(\nu_j | h_{<t}) / \textrm{normalizing-factor}$
    \EndFor
    \Return $n_{\textrm{models}}$, $(w(\nu_j | h_{<t}))_{j < n_{\textrm{models}}}$
\EndFunction

\Function{ExplorationProbability}{history $h_{<t}$; exploration history $e_{<t}$; world-models $(\nu_i)_{i \in \mathbb{N}}$; mentor-models $(\pi_i)_{i \in \mathbb{N}}$; prior $w$; tolerance $\varepsilon$; $m$; $\eta$}
\State \Return $\sum_{d = 1}^m \sum_{k=0}^{\min\{d-1, t\}} \frac{1}{d^2(d+1)} \min\{1, \frac{\eta}{d} \textsc{ExpectedInformationGain}(h_{<t}, d, w, \varepsilon, e_{<t})\}$
\EndFunction

\Statex

\Function{ExpectedInformationGain}{history $h_{<t}$; horizon $m$; prior $w$; tolerance $\varepsilon$; exploration history $e_{<t}$}
    \State $n_{\textrm{mod}}, (w(\nu_i | h_{<t}))_{i < n_{\textrm{mod}}} \gets \textsc{PosteriorWithinTolerance}(h_{<t}, \mathcal{M}, w, \varepsilon)$
    \State $n_{\textrm{pol}}, (w(\pi_i | h_{<t}))_{i < n_{\textrm{pol}}} \gets \textsc{PosteriorWithinTolerance}(h_{<t}, \mathcal{P}, w, \varepsilon, e_{<t})$
    \State \Return $\sum_{i < n_{\textrm{mod}}} w(\nu_i | h_{<t}) \sum_{j < n_{\textrm{pol}}} w(\pi_j | h_{<t}) \sum_{h_{t:t+m-1} \in \mathcal{H}^m} \p^{\pi_j}_{\nu_i}(h_{t:t+m-1} | h_{<t})$ \textsc{InformationGain}($h_{<t}$, $h_{<t+m}$, $w$, $\varepsilon$, $e_{<t}$)
\EndFunction

\algstore{alg}
\end{algorithmic}
\end{algorithm}
\begin{algorithm}
\begin{algorithmic}[1]
\algrestore{alg}

\Function{InformationGain}{history $h_{<t}$; future history $h_{<t+k}$; prior $w$; tolerance $\varepsilon$; exploration history $e_{<t}$}
    \State $n_{\textrm{pol}}, (w(\pi_i | h_{<t+k}))_{i < n_{\textrm{pol}}} \gets \textsc{PosteriorWithinTolerance}(h_{<t+k}, \mathcal{P}, w, \varepsilon, e_{<t})$
    \State $n_{\textrm{mod}}, (w(\nu_i | h_{<t+k}))_{i < n_{\textrm{mod}}} \gets \textsc{PosteriorWithinTolerance}(h_{<t+k}, \mathcal{M}, w, \varepsilon)$
    \State $\_, (w(\pi_i | h_{<t}))_{i < n_{\textrm{pol}}} \gets \textsc{PosteriorWithinTolerance}(h_{<t}, \mathcal{P}, w, \varepsilon, e_{<t}, n_{\textrm{pol}})$
    \State $\_, (w(\nu_i | h_{<t}))_{i < n_{\textrm{mod}}} \gets \textsc{PosteriorWithinTolerance}(h_{<t}, \mathcal{M}, w, \varepsilon, n_{\textrm{mod}})$
    \State $\KL \gets 0$
    \For {$i < n_{\textrm{pol}}$}
        \State $\KL \gets \KL + w(\pi_i | h_{<t+k}) \log \frac{w(\pi_i | h_{<t+k})}{w(\pi_i | h_{<t})}$
    \EndFor
    \For {$i < n_{\textrm{mod}}$}
        \State $\KL \gets \KL + w(\nu_i | h_{<t+k}) \log \frac{w(\pi_i | h_{<t+k})}{w(\pi_i | h_{<t})}$
    \EndFor
    \Return $\KL$
\EndFunction

\end{algorithmic} 
\end{algorithm}

\section{Equations from \citet{cohen2019strong} Used in Proofs} \label{sec:inqproof}

\begin{samepage}
From \citet[Equation 20]{cohen2019strong}, with minor modifications to our present case:
\begin{align*}
    &\frac{\eta}{\m^3 (m+1)} \sum_{t \in m\mathbb{N} + i} {}^{\pi^*}\!\!\E^{\overline{\pi}}_\xi \IG(h_{t:t+m-1} | h_{<t})
    \\
    &\equal^{(h)} \frac{\eta}{\m^3 (m+1)} {}^{\pi^*}\!\!\E^{\overline{\pi}}_\xi \sum_{t \in m\mathbb{N} + i} \sum_{\nu, \pi \in \mathcal{M} \times \mathcal{P}} w(\nu, \pi | h_{<t+m}) \log \frac{w(\nu, \pi | h_{<t+m})}{w(\nu, \pi | h_{<t})}
    \\
    &\equal^{(i)} \frac{\eta}{\m^3 (m+1)} \sum_{t \in m\mathbb{N} + i} \sum_{\nu, \pi \in \mathcal{M} \times \mathcal{P}} {}^{\pi^*}\!\!\E^{\overline{\pi}}_\xi \frac{w(\nu, \pi) {}^{\pi^*}\!\!\p^{\pi}_\nu(h_{<t})} {{}^{\pi^*}\!\!\p^{\overline{\pi}}_\xi(h_{<t})} \log \frac{w(\nu, \pi | h_{<t+m})}{w(\nu, \pi | h_{<t})}
    \\
    &\equal^{(j)} \frac{\eta}{\m^3 (m+1)} \sum_{t \in m\mathbb{N} + i} \sum_{\nu, \pi \in \mathcal{M} \times \mathcal{P}} {}^{\pi^*}\!\!\E^{\pi}_\nu w(\nu, \pi) \log \frac{w(\nu, \pi | h_{<t+m})}{w(\nu, \pi | h_{<t})}
    \\
    &\equal^{(k)} \lim_{N \to \infty} \frac{\eta}{\m^3 (m+1)} \sum_{k = 0}^{N-1}  \sum_{\nu, \pi \in \mathcal{M} \times \mathcal{P}} {}^{\pi^*}\!\!\E^{\pi}_\nu w(\nu, \pi) \log \frac{w(\nu, \pi | h_{<mk+i+m})}{w(\nu, \pi | h_{<mk+i})}
    \\
    &\equal^{(l)} \lim_{N \to \infty} \frac{\eta}{\m^3 (m+1)}  \sum_{\nu, \pi \in \mathcal{M} \times \mathcal{P}} {}^{\pi^*}\!\!\E^{\pi}_\nu w(\nu, \pi) \log \prod_{k = 0}^{N-1} \frac{w(\nu, \pi | h_{<m(k+1)+i})}{w(\nu, \pi | h_{<mk+i})}
    \\
    &\equal^{(m)} \lim_{N \to \infty} \frac{\eta}{\m^3 (m+1)}  \sum_{\nu, \pi \in \mathcal{M} \times \mathcal{P}} {}^{\pi^*}\!\!\E^{\pi}_\nu w(\nu, \pi) \log \frac{w(\nu, \pi | h_{<mN+i})}{w(\nu, \pi | h_{<i})}
    \\
    &\lequal^{(n)} \frac{\eta}{\m^3 (m+1)}  \sum_{\nu, \pi \in \mathcal{M} \times \mathcal{P}} {}^{\pi^*}\!\!\E^{\overline{\pi}}_\nu w(\nu, \pi) \log \frac{1}{w(\nu, \pi | h_{<i})}
    \\
    &\equal^{(o)} \frac{\eta}{\m^3 (m+1)} \sum_{\nu \in \mathcal{M}} {}^{\pi^*}\!\!\E^{\overline{\pi}}_\nu w(\nu, \pi) \log \frac{1}{w(\nu, \pi)} \frac{w(\nu, \pi)}{w(\nu, \pi | h_{<i})}
    \\
    &\equal^{(p)} \frac{\eta}{\m^3 (m+1)}  \sum_{\nu, \pi \in \mathcal{M} \times \mathcal{P}} w(\nu) \log \frac{1}{w(\nu, \pi)} + \frac{\eta}{\m^2 (m+1)}  \sum_{\nu, \pi \in \mathcal{M} \times \mathcal{P}} {}^{\pi^*}\!\!\E^{\overline{\pi}}_\nu w(\nu, \pi) \log \frac{w(\nu, \pi)}{w(\nu, \pi | h_{<i})}
    \\
    &\equal^{(q)} \frac{\eta}{\m^3 (m+1)} \mathrm{Ent}(w) + \frac{\eta}{\m^2 (m+1)} \sum_{h_{<i} \in \mathcal{H}^i}  \sum_{\nu, \pi \in \mathcal{M} \times \mathcal{P}} w(\nu, \pi) {}^{\pi^*}\!\!\p^{\pi}_\nu(h_{<i}) \log \frac{w(\nu, \pi)}{w(\nu, \pi | h_{<i})}
    \\
    &\equal^{(r)} \frac{\eta}{\m^3 (m+1)} \mathrm{Ent}(w) + \frac{\eta}{\m^2 (m+1)} \sum_{h_{<i} \in \mathcal{H}^i}  \sum_{\nu, \pi \in \mathcal{M} \times \mathcal{P}} w(\nu, \pi | h_{<i}) {}^{\pi^*}\!\!\p^{\overline{\pi}}_\xi(h_{<i}) \log \frac{w(\nu, \pi)}{w(\nu, \pi | h_{<i})}
    \\
    &\equal^{(s)} \frac{\eta}{\m^3 (m+1)} \mathrm{Ent}(w) - \frac{\eta}{\m^2 (m+1)} {}^{\pi^*}\!\!\E^{\overline{\pi}}_\xi [\IG(h_{<i}|\epsilon)] \lequal^{(t)} \frac{\eta}{\m^3 (m+1)} \mathrm{Ent}(w)
    \tagaligneq
\end{align*}
\end{samepage}

(h) expands the definition of the information gain. (i) rearranges the expectations and the sums, and expands $w(\nu, \pi | h_{<t+m})$ according to Bayes' rule. (j) converts the expectation to a expectation with respect to a different probability measure through simple cancellation. (k) implements a change of variable from $t$ to $mk + i$. (l) moves a sum inside the logarithm. (m) cancels out all terms expect the numerator of the last term and the denominator of the first. (n) follows from all posterior weights being $\leq 1$. (o) and (p) are obvious. (q) applies the definition of the entropy of a distribution $\mathrm{Ent}(\cdot)$, and expands the expectation. (r) changes the variable in the expectation; this is the reverse of (i) and (j). (s) applies the definition of the information gain (after inverting the fraction in the logarithm). (t) follows from the non-negativity of the information gain.

From \citet[Inequality 24]{cohen2019strong}, with minor modifications to our present case: 

\begin{align*}
    \varepsilon &\leq (\p^{\pi^h}_\mu(h_{t:t+m-1}|h_{<t}) - \p^{\pi^h}_\xi(h_{t:t+m-1}|h_{<t}))^2
    \\
    &\lequal^{(a)} \KL_{h_{<t}, m}\left(\p^{\pi^h}_\mu \vb \vb \p^{\pi^h}_\xi \right)
    \\
    &\lequal^{(b)} \sum_{\nu, \pi \in \mathcal{M} \times \mathcal{P}} \frac{w(\nu, \pi | h_{<t})}{w(\mu, \pi^h | h_{<t})} \KL_{h_{<t}, m}\left(\p^{\pi}_\nu \vb \vb \p^{\pi}_\xi \right)
    \\
    &\equal^{(c)} \frac{1}{w(\mu, \pi^h | h_{<t})} \E^{\overline{\pi}}_\xi \KL\left(w(\cdot | h_{<t+m}) \vb \vb w(\cdot | h_{<t}) \right)
    \\
    &\lequal^{(d)} \frac{1}{\inf_k w(\mu, \pi^h | h_{<k})} V^{\IG}_{m, 0}(h_{<t})
    \tagaligneq
\end{align*}

(a) is a result from information theory known as the entropy inequality, proven for example in \citep{Hutter:04uaibook}. (b) follows from the non-negativity of the KL-divergence, and the l.h.s. being one of the summands of the r.h.s. (c) follows from \citet{cohen2019strong} Lemma~4. And (d) follows from the definitions of the information gain value and the infimum.

Following \citet[Proof of Theorem 3]{cohen2019strong}, we show that if $\p^\pi_\mu(h_{t:t+m-1} | h_{<t}) - \p^\pi_\xi(h_{t:t+m-1} | h_{<t}) \to 0$ for all $m$, then $V^\pi_\mu(h_{<t}) - V^\pi_\xi(h_{<t}) \to 0$.

Let $\varepsilon > 0$. Since the agent has a bounded horizon, there exists an $m$ such that for all $t$, $\frac{\Gamma_{t+m}}{\Gamma_{t}} \leq \varepsilon$. Recall
\begin{equation}
    V^\pi_\nu(h_{<t}) = \frac{1}{\Gamma_t}\mathbb{E}^{\pi}_\nu\left[\sum_{k=t}^\infty \gamma_k r_k \biggm\vert h_{<t}\right]
\end{equation}
Using the $m$ from above, let
\begin{equation}
    V^{\pi \setminus m}_\nu(h_{<t}) := \frac{1}{\Gamma_t}\mathbb{E}^{\pi}_\nu\left[\sum_{k=t}^{t+m-1} \gamma_k r_k \biggm\vert h_{<t}\right]
\end{equation}
Since $r_t \in [0, 1]$,
\begin{equation}
    |V^\pi_\nu(h_{<t}) - V^{\pi \setminus m}_\nu(h_{<t})| \leq \frac{\Gamma_{t+m}}{\Gamma_{t}} \leq \varepsilon
\end{equation}
Suppose $V^\pi_\mu(h_{<t}) - V^\pi_\xi(h_{<t}) > 3\varepsilon$. Then $V^{\pi \setminus m}_\mu(h_{<t}) - V^{\pi \setminus m}_\xi(h_{<t}) > \varepsilon$. But since $\p^\pi_\mu(h_{t:t+m-1} | h_{<t}) - \p^\pi_\xi(h_{t:t+m-1} | h_{<t}) \to 0$, and the value is the expectation with respect to those measures, and reward is bounded, this can only occur finitely often. Thus, $V^\pi_\mu(h_{<t}) - V^\pi_\xi(h_{<t}) > 3\varepsilon$ holds only finitely often, so the values converge.

\section{$\rho$UCT Algorithm} 
\label{sec:rhouctalgo}

To approximate expectimax planning, specifically to approximate the expected future rewards and therefore the value function, like \cite{aslanides2017aixijs,aslanides2017universal,cohen2019strong} we used the $\rho$UCT Monte-Carlo tree search method. Below we have provided the algorithm for $\rho$UCT from \cite{aslanides2017aixijs}. The $\rho$UCT algorithm starts with an empty search tree $\Psi$ over actions, and observation-reward pairs, then uses the provided model $\rho$ to sample down and build the search tree, and then use those samples to compute a better approximation of the value function. If allowed to sample forever the approximation of the value function will converge to the true value function \cite{Hutter:11aixictwx}. The difference between $\rho$UCT and regular Monte-Carlo methods is the optimistic choice of actions when expanding the search tree in line 32 of Algorithm \ref{alg:rhouct}. This choice of action incorporates an exploration component, with the inclusion of $C\sqrt{\frac{\log (T(h))}{T(ha)}}$, as $T$ is the number of times that history (or history and action) have been visited during the sampling process. This ensures that the whole tree is expanded in the limit.

\begin{algorithm}[H]
\fontsize{12}{8}\selectfont
\begin{algorithmic}[1]
\Require{History $h$; Search horizon $m$; Samples budget $\kappa$; Model $\rho$}
\State $\textsc{Initialize}\left(\Psi\right)$ // Search tree
\State $n_{\text{samples}} \leftarrow 0$
\Repeat
	\State $\rho' \leftarrow \rho.\textsc{Copy}()$
	\State $\textsc{Sample}\left(\Psi,h,m\right)$
	\State $n_{\text{samples}} \leftarrow n_{\text{samples}} +1$
	\State $\rho \leftarrow \rho'$
\Until $n_{\text{samples}} = \kappa$
\State \Return $\arg\max_{a\in\mathcal{A}}\hat{V}_{\Psi}(a)$

\Statex

\Function{Sample}{$\Psi,h,m$}
	\If {$m = 0$}
		\State \Return 0
	\ElsIf {$\Psi(h)$ is a chance node}
		\State $\rho.\textsc{Perform}(a)$
		\State $e=(o,r)\leftarrow \rho.\textsc{GeneratePercept}()$
		\State $\rho.\textsc{Update}(a,e)$
		\If {$T(he) = 0$}
			\State Create chance node $\Psi(he)$
		\EndIf
		\State reward $\leftarrow e.\textsc{reward} + \textsc{Sample}\left(\Psi,he,m-1\right)$
	\ElsIf {$T(h) = 0$}
		\State reward $\leftarrow \textsc{Rollout}(h,m)$
	\Else
		\State $a \leftarrow \textsc{SelectAction}\left(\Psi,h\right)$
	\EndIf
\State $\hat{V}(h)\leftarrow \frac{1}{T(h)+1}\left(\text{reward}+T(h)\hat{V}(h)\right)$
\State $T(h)\leftarrow T(h) + 1$
\EndFunction

\Statex

\Function{SelectAction}{$\Psi,h$}
	\State $\mathcal{U} = \{a\in\mathcal{A}\ :\ T(ha)=0\}$
	\If {$\mathcal{U} \neq \emptyset$}
		\State Pick $a\in\mathcal{U}$ uniformly at random
		\State Create node $\Psi(ha)$
		\State \Return $a$
	\Else
		\State \Return $\arg\max_{a\in\mathcal{A}}\left\{\frac{1}{m\left(\beta-\alpha\right)}\hat{V}(ha)+C\sqrt{\frac{\log(T(h))}{T(ha)}}\right\}$
	\EndIf	
\EndFunction

\Statex

\Function{Rollout}{$h,m$}
	\State reward $\leftarrow 0$
	\For {$i=1$ to $m$}
		\State $a \sim\pi_{\text{rollout}}(h)$
		\State $e=(o,r) \sim \rho(e\lvert ha)$
		\State reward $\leftarrow$ reward $+ r$
		\State $h\leftarrow hae$
	\EndFor
	\State \Return reward
\EndFunction
\end{algorithmic} 

\caption{$\rho$UCT \citep{aslanides2017aixijs,Hutter:11aixictwx}}\label{alg:rhouct}
\end{algorithm}
\fi
\end{document}